\newtheorem{theorem}{Theorem}
\newtheorem{corollary}{Corollary}
\newtheorem{lemma}{Lemma}
\newtheorem{proof}{Proof}
\newtheorem{assumption}{Assumption}
\newcommand{\conv}[2]{k_{#1}^{#2 *}}
\newcommand{\yzp}{\textcolor{black}}
\begin{document}
%
\title{Audio-Driven Talking Face Video Generation with Dynamic Convolution Kernels}
%
%

\author{Zipeng Ye,
        Mengfei Xia,
        Ran Yi,
        Juyong Zhang,~\IEEEmembership{~Member,~IEEE},
        Yu-Kun Lai,~\IEEEmembership{~Member,~IEEE},
        Xuwei Huang,
        Guoxin Zhang,
        Yong-Jin Liu,~\IEEEmembership{Senior Member,~IEEE}
\IEEEcompsocitemizethanks{\IEEEcompsocthanksitem 
Z. Ye, M. Xia, Y.-J. Liu are with MOE-Key Laboratory of Pervasive Computing, the Department of Computer Science and Technology, Tsinghua University, Beijing, China.
\IEEEcompsocthanksitem R. Yi is with the Department of Computer Science and Engineering, Shanghai Jiao Tong University.
\IEEEcompsocthanksitem J. Zhang is with the School of Mathematical Sciences,
University of Science and Technology of China.
\IEEEcompsocthanksitem Y.-K. Lai is with School of Computer Science and
Informatics, Cardiff University, UK.
\IEEEcompsocthanksitem X. Huang and G. Zhang is with Kwai Inc.}
\thanks{Y.-J. Liu and R. Yi are corresponding authors. E-mail: liuyongjin@tsinghua.edu.cn.}}

%

\markboth{To appear in IEEE Transactions on Multimedia, \MakeLowercase{https://doi.org}/10.1109/TMM.2022.3142387}%
{Shell \MakeLowercase{\textit{et al.}}: Bare Demo of IEEEtran.cls for IEEE Journals}
%
\maketitle

\begin{abstract}
In this paper, we present a dynamic convolution kernel (DCK) strategy for convolutional neural networks. Using a fully convolutional network with the proposed DCKs, high-quality talking-face video can be generated from multi-modal sources (i.e., unmatched audio and video) in real time, and our trained model is robust to different identities, head postures, and input audios. 
Our proposed DCKs are specially designed for audio-driven talking face video generation, leading to a simple yet effective end-to-end system. We also provide a theoretical analysis to interpret why DCKs work.
Experimental results show that our method can generate high-quality talking-face video with background at $60$ fps. Comparison and evaluation between our method and the state-of-the-art methods demonstrate the superiority of our method.
\end{abstract}

\begin{IEEEkeywords}
dynamic kernel, convolutional neural network, multi-modal generation task, audio-driven talking-face generation.
\end{IEEEkeywords}

\IEEEpeerreviewmaketitle

\section{Introduction}\label{sec:introduction}

\IEEEPARstart{T}ALKING-FACE video refers to video which mainly focuses on head or upper body of the speaker given audio or text signals. It has wide range of applications in news, TV shows, commercials, online chat, online courses, etc. According to the types of input signals, there are text-driven (e.g.,~\cite{FriedTZFSGGJTA19}), audio-driven (e.g,~\cite{ChenMDX19, ChungJZ17, SongZLWQ19, ZhouLLLW19, WilesKZ18, FanWSX15, VougioukasPP19, yi2020audio, song2020everybody, thies2020neural}) and video-driven (e.g.,~\cite{KimCTXTNPRZT18, ThiesZSTN16, Averbuch-ElorCK17, PumarolaAMSM18, WilesKZ18, ZakharovSBL19, ZhangZHLLL19}) talking-face systems. In this paper, we propose an audio-driven talking-face system, capable of transferring the input talking-face video to a generated one corresponding to the input audio. It is a naturally cross-modal task with video and audio (i.e., of visual and auditory modalities) as input. The two modalities are strongly correlated~\cite{Nazzaro70}, and thus it is possible to drive the talking-face video using an audio.


\yzp{In this paper, we consider multi-modal fusion in a generation task, i.e., audio-driven talking-face video generation. For this task, a direct way is to treat multi-modal input as different features. To align these features, we can rearrange the audio features as additional channels of image frames and concatenate them with the image features. 
However, this method maps the audio feature elements to fixed locations, and as we will later show in our experiments presented in Section \ref{subsec:ablation}, it only works under special conditions where all the frames are aligned (i.e., each frame containing a frontal face at a fixed position), which are difficult to meet in practice.
Another possible way is to use landmark points or parametric models~\cite{BlanzV99, CaoWZTZ14} as a prior, which can be inferred from the audio sequence. Facial landmarks are highly correlated to expression but also sensitive to head pose, view angle and scale. Therefore, it is necessary to align input photo/frames with a standard face, which has challenges dealing with the following: (1) facial image fusion with background, (2) head motion and (3) extreme head pose. 3D parametric models can be used as a strict and precise prior, which preserves almost all the information of expression and lip motion, and we can render an image using the parametric model. However, parametric models only contain low frequency information and the rendered images are often not photo-realistic. Therefore, post-processing is needed, which makes the pipeline complex and time-consuming. On the other hand, using these priors, it is difficult to design an end-to-end system with a fully convolutional neural network (FCNN), which is desired to ensure generalizability.}

To overcome these drawbacks, in this paper, we propose a novel dynamic convolution kernel (DCK) technique that works well with FCNN for multi-modal generation tasks. Our key idea is to use a network to infer DCKs in a FCNN from audio modality. Then this FCNN can work with diverse input videos that have different head poses. Our model, i.e., FCNN with DCKs, is a network with dynamic parameters. In the literature, a few dynamic CNN parameter methods existed ~\cite{HaDL17, esquivel2019adaptive, li2018high, nie2019dynamic, chen2020dynamic}. However, they were all proposed for processing single modal information and due to limited adaptivity, they are difficult to be extended to handle cross-modal applications. See Section \ref{ssec:dynamic-para} for more details. Our DCKs are totally different in both purpose and content: (1) DCKs are for multi-modal tasks, where the kernel is inferred from input audio, and (2) DCKs use completely flexible kernels, and are linear once the kernels are determined.

In this paper, we consider the following characteristics in our audio-driven talking-face system: (1) \emph{Real time:} the video can be generated online when the audio signal is available; (2) \emph{High quality:} the quality of generated video should be good enough such that people cannot easily distinguish between real video and generated video; (3) \emph{Identity preserving:} the identity of the generated video should be preserved with the input video or photo; (4) \emph{Expression and voice synchronization:} the expression and lip motion of the generated video should be synchronized with the input audio; (5) \emph{Head motion:} the head pose and head motion of the generated video should be natural. 

To address these characteristics, we propose DCKs and use them to build an end-to-end and one-for-all system, which only needs to be trained once and can work for different identities. To make better use of the multi-modal inputs which are difficult to fuse, we design DCKs which are different from traditional static convolution kernels. Once the model is trained, traditional convolution kernels no longer change. In contrast, our DCK will change with different inputs. We \yzp{use the pre-trained audio network~\cite{Wav2Lip} to extract audio features and train a fully connected network} to infer the DCKs from the input audio, and therefore we can design a fully convolutional network for video with different audio inputs well handled. We adapt the U-net~\cite{RonnebergerFB15} for DCKs by replacing convolutional kernels at selected layers to DCKs. Furthermore, \yzp{we propose a novel dataset (including real videos and synthetic videos) to train our model in a supervised way.}

In summary, the main technical contributions of our work include: 
\begin{itemize}
\item We propose DCKs as an effective way to generate high-quality talking-face video from multi-modal input in {\it real time} with background and natural head motion, which is simple yet effective. 
\item We provide a theoretical analysis to explain DCKs' effectiveness. 
\item \yzp{We propose a novel mixed dataset, including both real videos and synthetic videos, to supervise the training of our model.}
\end{itemize}

\section{Related Work}

\subsection{Multi-modal Fusion}

One key challenge in tasks with multi-modal input is how to effectively fuse features in them.
In various engineering fields, many algorithms have been proposed for fusing features collected from different types of sensors, which may have different modalities, rates, formats or confidence levels. The Kalman filter~\cite{kalman1960new} is a classical algorithm for multi-sensor fusion. Bayesian inference~\cite{hall1997introduction} is another classic technique to fuse different features. For full details of existing fusion methods, the reader is referred to recent surveys \cite{atrey2010multimodal,BaltrusaitisAM19} and references therein.

Our study focuses on the neural network techniques. In this domain, a simple way for feature fusion is to directly concatenate features. The other simple way is to use different networks for extracting features of different modalities and use late feature fusion. The two simple strategies work well in classification and regression tasks, and \yzp{achieve} successes in many applications (e.g.,~\cite{hori2017attention, nojavanasghari2016deep, lin2021orthogonalization}). Video is the most common input with different modalities. For classification and regression tasks of video, some learning-based methods~\cite{wang2020knowledge, kuang2020deep, buitelaar2018mixedemotions, jiang2018modeling} are proposed, which design network structures for fusing multi-modal input. On the other hand, the talking face video generation is a generation task, which is quite different from classification and regression tasks, and the simple concatenation strategy often fails. The methods which are designed for classification and regression tasks also fail. In some image generation tasks, using landmarks and 3D models as priors is useful to fuse multi-modal input (e.g.,~\cite{ChenMDX19, yi2020audio, song2020everybody, thies2020neural, wen2020photorealistic}). However, as we mention in Section \ref{sec:introduction}, it is difficult to design a fully CNN and an end-to-end system using 2D or 3D prior. In this paper, we propose a novel fully convolutional network with DCKs for the talking face video generation task with multi-modal input.

\subsection{Neural Networks with Dynamic Parameters}
\label{ssec:dynamic-para}

The new model proposed in this paper, i.e., the fully convolutional network with DCKs, is a neural network with dynamic parameters. In recent years, several research works on designing neural networks with dynamic parameters have been proposed. The HyperNetworks~\cite{HaDL17} uses a hypernetwork to generate the weights for the other network, which has the similar idea as ours, but their motivations (for language modeling) and network structures (using recurrent neural network) are completely different from ours.

For CNNs, although fixed kernels are dominant in most research, there exist adaptions of CNNs (\cite{esquivel2019adaptive, li2018high, nie2019dynamic, chen2020dynamic}) whose kernels can be dynamically adjusted. However, all these methods can only handle single mode information as input. The work \cite{esquivel2019adaptive} is designed for a classification task, which estimates a set of weights from input and these weights are used for balancing the output of nine sub-network-structures. Although the weights can be dynamically set, only adjusting the weights of nine sub-structures has limited capacity (that is suitable for single mode input); while our DCKs can adaptively set up to $10^4$ parameters, which are more powerful and suitable for multi-modal input. The work \cite{chen2020dynamic} is similar to \cite{esquivel2019adaptive} in the spirit of using dynamic weights to adjust the linear combinations of a few convolution layers. The works \cite{li2018high, nie2019dynamic} predict a feature from input (different input may lead to different feature) and use this feature to do convolution at fixed positions in the network. As a comparison, our DCKs can predict parameters in different positions in the network and thus \yzp{are} more \yzp{flexible}.

\begin{figure}[ht]
\centering
\includegraphics[width=\columnwidth]{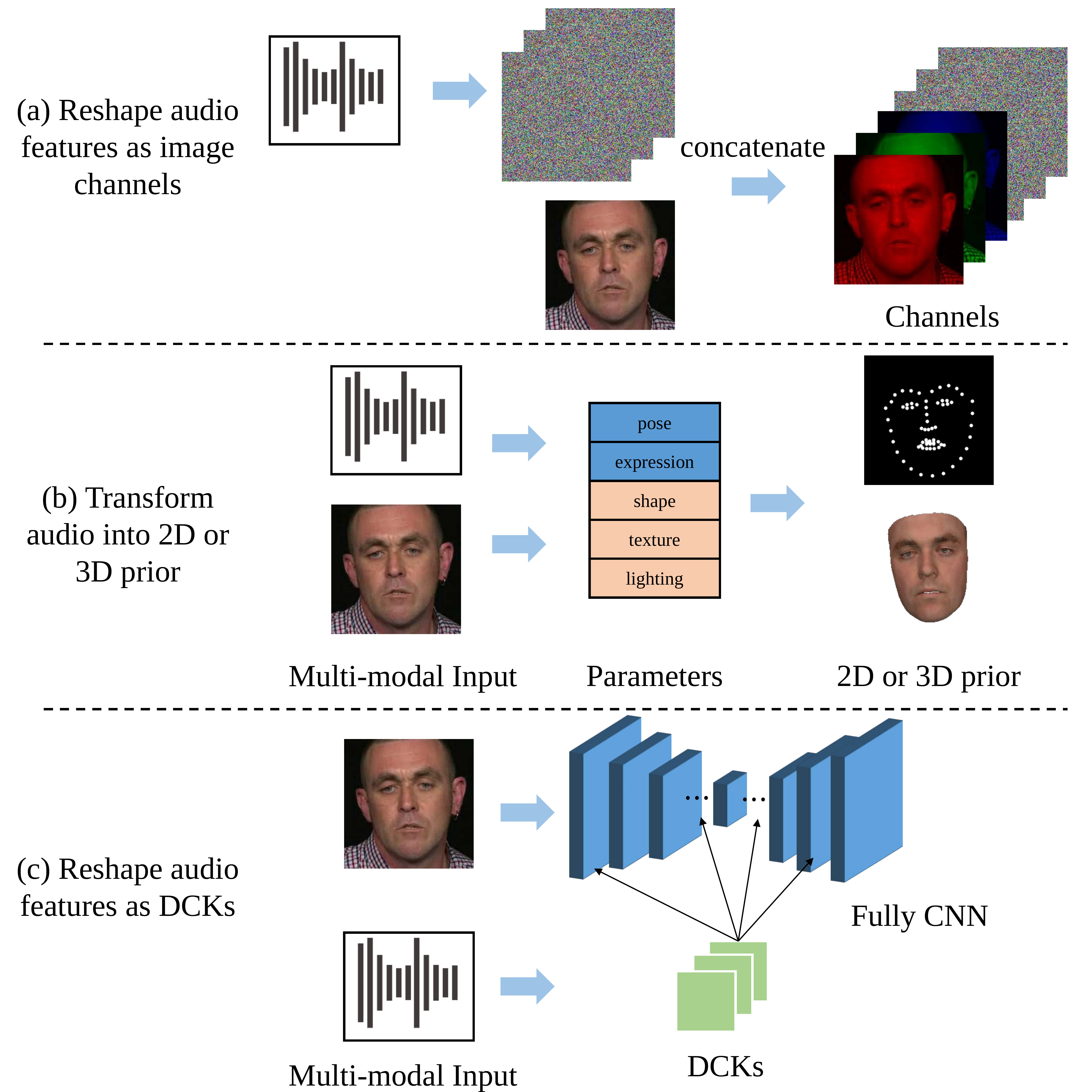}
\caption{Three strategies for fusing multi-modal input. (a) A simple and direct strategy is to extract the features of different modes, and concatenate them to feed into a network~\cite{ChungJZ17, ZhouLLLW19}. (b) The second strategy is to use 2D~\cite{ChenMDX19} or 3D~\cite{yi2020audio, wen2020photorealistic, thies2020neural, song2020everybody} prior. It infers the prior's parameters from the audio input. (c) We propose a novel strategy, which extracts features from audio input and reshape features as DCKs of fully convolutional network.}
\label{fig:feature_fusion}
\end{figure}

\subsection{Audio-driven Talking-face Video Generation}

Audio-driven talking-face video generation is a task that uses an audio to drive a specified face (from a face photo or a talking face video) and produce a talking face video, with focus on fine talking details in head or upper body of the speaker \cite{ChenMDX19, ChungJZ17, SongZLWQ19, ZhouLLLW19, WilesKZ18, FanWSX15, VougioukasPP19, yi2020audio, song2020everybody, thies2020neural, wen2020photorealistic}. It is a typical task that uses multi-modal input. Many previous works use facial landmarks or 3D morphable models (3DMM)~\cite{BlanzV99} as priors to bridge the two modalities. Inferring the prior from audio and using it to generate talking face video have been used in some practical methods \cite{ChenMDX19, yi2020audio, wen2020photorealistic, thies2020neural, song2020everybody}. 
However, it is difficult to use 2D or 3D prior to design an end-to-end system which can address the five characteristics summarized in Section \ref{sec:introduction}.

In \cite{SongZLWQ19,ZhouLLLW19} two end-to-end methods are proposed that do not use 2D or 3D priors. The work \cite{ZhouLLLW19} generates talking face video by the composition of a subject-related part and a speech-related part. Based on this composition, they propose an encoder-decoder model to disentangle the two parts from the video and use the input audio as the speech-related part to generate video. The work~\cite{SongZLWQ19} proposes a conditional recurrent generation network which uses an audio and an image as input, and output a video. Both methods can only work with a fixed standard head pose and output a video without head motion. As a comparison, our method can work with different poses and generate natural head motion.

\section{Dynamic Convolution Kernels (DCKs)}

\subsection{Motivation}

In this paper, we deal with a talking-face video generation task whose input is a pair of unmatched audio and video. The input contains entirely different modalities which have different forms and contents. How to fuse them together to effectively guide the training process is not easy and many works have studied this. As discussed in Section \ref{sec:introduction}, our target is to generate high-quality video with head motion. How to design the fusion is key to achieving these targets. Currently, there are two popular strategies to perform the fusion:

\begin{figure}[ht]
\centering
\includegraphics[width=\columnwidth]{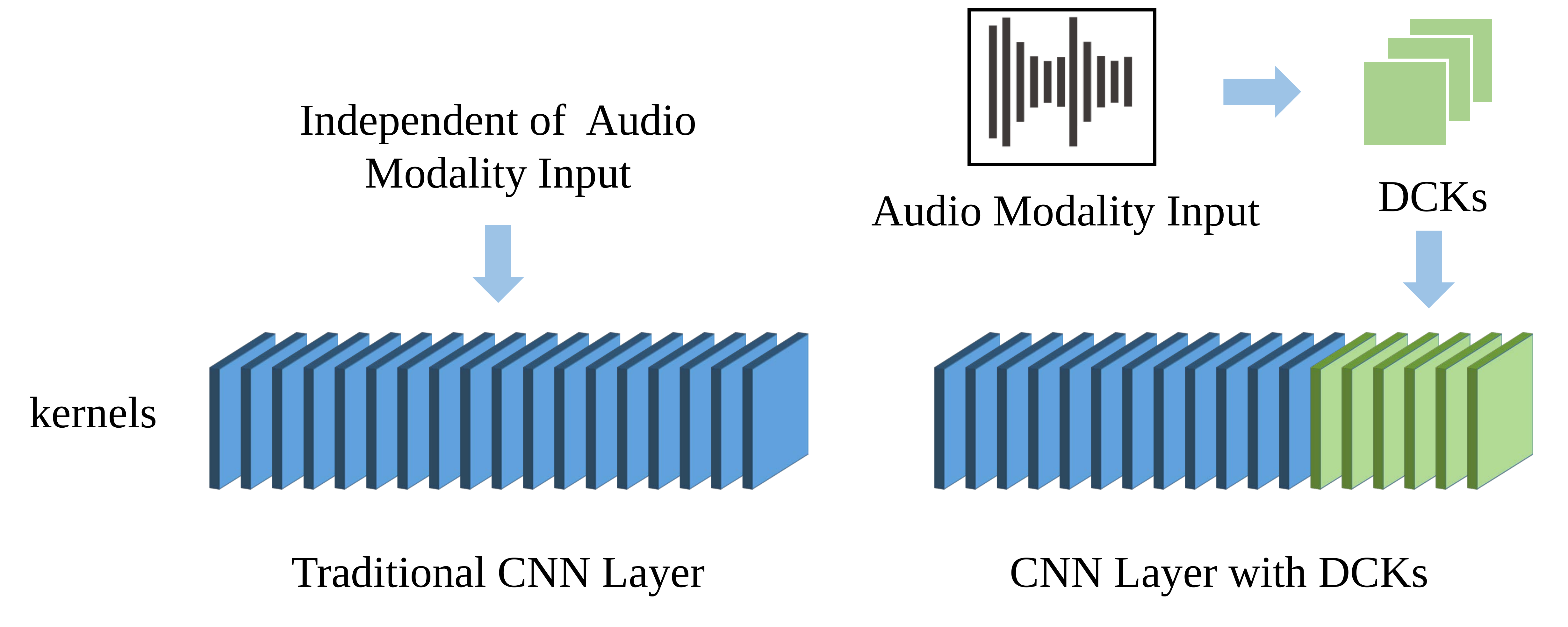}
\caption{Illustration of a CNN layer with DCKs. The blue are static convolution kernels and the green are dynamic convolution kernels. We reshape the features of audio modality into the shape of convolution kernels and use them to complete the CNN layer.}
\label{fig:DCKs_illustration}
\end{figure}

The first strategy for multi-modal input is to extract their features, and concatenate them or input them together to a network \cite{ChungJZ17, ZhouLLLW19} (Figure \ref{fig:feature_fusion}a). For example, we can use encoders to transform them into vectors and use a decoder to generate a video, which is not a fully convolutional network. We can also reshape the features of the input audio as channels of an image and concatenate them with the image, which works in a special case that the images are aligned so each pixel position has a fixed content. However, it is hard to achieve success in general because images are usually not aligned strictly and there is no fixed semanteme at each pixel. Our experimental results in Section \ref{subsec:ablation} demonstrate this observation.

The second strategy is to use a parametric model as 3D prior (e.g.,~\cite{yi2020audio, wen2020photorealistic, thies2020neural, song2020everybody}) or use facial landmarks as 2D prior (e.g.,~\cite{ChenMDX19}). As shown in Figure \ref{fig:feature_fusion}b, they use audio to predict the prior and then use the prior to conduct the generation of videos. Using facial landmarks requires alignment of images and using 3D prior usually leads to high time cost, as demonstrated by our experimental results in Section \ref{subsec:stoa}.

To directly output high-quality video frames, a fully convolutional network is usually preferred to ensure generalizability. In this work, we design dynamic convolution kernels (DCKs), which are different from traditional static convolution kernels (Figure \ref{fig:feature_fusion}c). Once the model is trained, traditional convolution kernels no longer change. In contrast, our DCK is designed to infer from different inputs and therefore can change during the inference process (Figure~\ref{fig:DCKs_illustration}). We use the convolution kernel as part of the generative network, which is dynamic for different input audios.

\subsection{The Structure of DCKs}

A traditional fully convolutional network $f(x)$, whose convolution kernels are $K(f) = \{ k_1, k_2, \dots, k_n \}$, is a transformation that iteratively applies the convolution operation to the input $x$. Denote by $y_0, y_1, \dots, y_n$ the intermediate results where $x=y_0$ is the input and $y_n$ is the output, we have:
\begin{equation}
y_{i} = g_i(k_i^{\star} y_{i-1})\quad for \quad i=1, 2, \dots, n,
\end{equation}
where $k_i^{\star}$ is the the $i$th convolution operator whose kernel is $k_i$ and $g_i$ is the $i$th combination of normalization and activation function. In this case, all convolution kernels are static which are learned from the training set and will not change in inference. 

\begin{figure*}[ht]
\centering
\includegraphics[width=\textwidth]{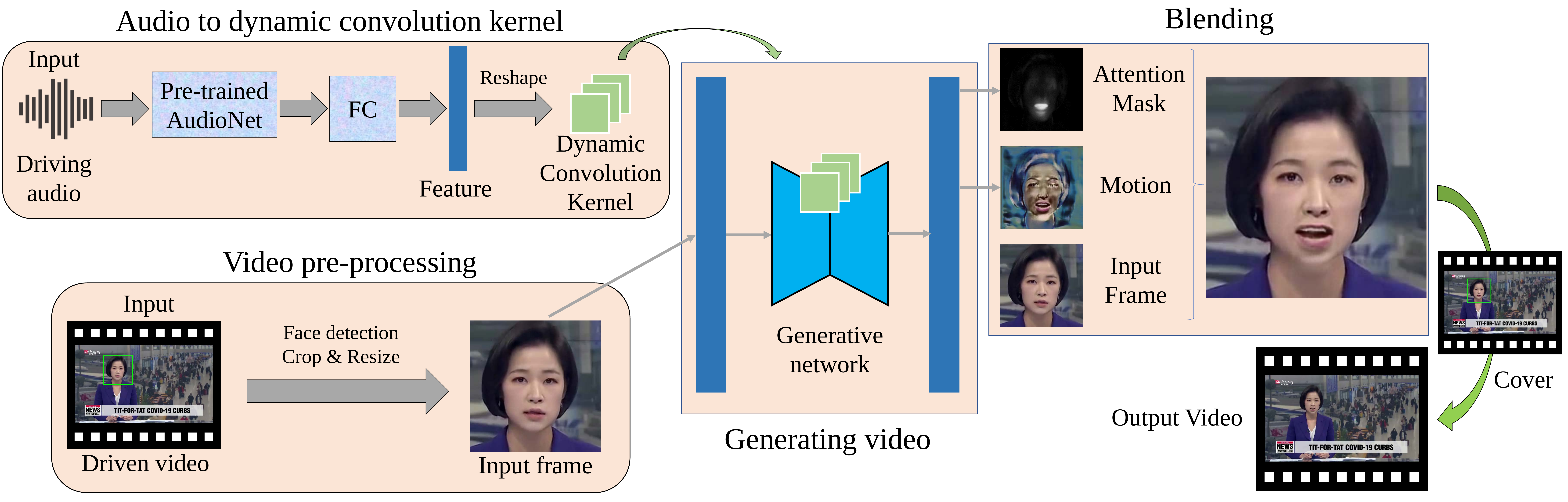}
\caption{The pipeline of our system and the architecture of our model. We adapt the U-net by incorporating dynamic convolution kernels as our generative network. We \yzp{use the pre-trained audio network~\cite{Wav2Lip} to extract audio features and train a fully connected network} to infer the dynamic convolution kernels from the input audio and use them to replace some traditional static convolution kernels. We detect the facial area from input video and crop it as the input of the generative network. The outputs of the generative network are an attention mask and a motion image. We blend them with the input to obtain the generation result.}
\label{fig:pipeline}
\end{figure*}

We propose a fully convolutional network using DCKs. Some selected convolutional layers $K_d \subset K(f)$ are no longer fixed after training, but instead are determined based on the input audio via a neural network, i.e. $k_j = h_j(A)$, for $k_j \in K_d$, where $h_j$ is a neural network to determine the $j$th DCK given the input audio $A$. More generally, denote by $\Theta(f)$ all the parameters of a fully convolutional network, i.e. parameters of all convolution kernels, and some selected parameters $\Theta_d(f) \subset \Theta(f)$ are no longer fixed after training. A traditional fully convolutional network $f(x)$ can be written as $f(x; \Theta(f))$. The corresponding network with DCKs is $f(x; \Theta_s(f), \Theta_d(f))$, where $\Theta_s(f) = \Theta(f)\setminus \Theta_d(f)$ is the rest static convolution kernels. \yzp{We use a network to dynamically generate $\Theta_d(f)$ based on features of the input audio. We (1) use the pre-trained audio network in Wav2Lip~\cite{Wav2Lip}, which consists of 2D convolutional layers and residual blocks, to extract audio features from Mel Spectrogram of input audio and (2) train a fully connected network to infer $\Theta_d(f)$ from the audio features. We have:}
\begin{equation}
\Theta_d(f) = h_{2}(h_{1}(A)),
\end{equation}
\yzp{where $A$ is the Mel Spectrogram of input audio, $h_{1}$ is the pre-trained audio network~\cite{Wav2Lip} and $h_{2}$ is the fully connected network.} We reshape the output of the audio network into the shape of convolution kernels and use them to complete the fully convolutional network. Therefore, the fully convolutional network is dynamic with different input audios.

The advantages of using DCKs include the following aspects: (1) We can design a fully convolutional network for the input video, leading to real-time performance; (2) The convolution kernel is dynamic and can effectively fuse features from multi-modal inputs; (3) There is no binding between features and positions of pixels so it can work in different poses and different translations. 

We present a theoretical interpretation for DCKs in Section \ref{sec:DCK-interpretation}, to explain why it is useful for the cross-modal talking face video generation task.

\section{The System}
\label{sec:system}

We propose an audio-driven talking face video generation system, whose inputs are unmatched audio and video, and the output is a synthetic video. The pipeline of our system is shown in Figure \ref{fig:pipeline}. It is an end-to-end approach by directly outputting the synthetic video without intermediate results. Our system can generate high quality results in real time by efficiently incorporating the DCK technique and a \yzp{supervised} training scheme.

\subsection{Fully Convolutional Network with DCKs}

Our system deals with a multi-modal generation task whose input includes both audio and video. We \yzp{use the pre-trained audio network in Wav2Lip~\cite{Wav2Lip} to extract audio features from Mel Spectrogram of input audio, and train (1) a fully connected network to generate DCKs from audio features, and (2) a fully convolutional network with DCKs.
For training the two networks, we propose a novel method to train our model in a supervised way.}

\subsection{Training}
\label{subsec:trainning}

In our task, the inputs are a pair of unmatched video and audio, and the output is a synthetic video. Denote by $\mathcal{V}$ the space of talking-face video and $\mathcal{A}$ the space of audio of talking-face video. An audio-driven talking-face system is a function $f: \mathcal{V} \times \mathcal{A} \rightarrow \mathcal{V}$. For any $A \in \mathcal{A}$ and $V \in \mathcal{V}$, $f(V, A)$ is a synthetic video, which have the same identity as $V$ and the same expression (including lip motion) as $A$. 

We use a supervised training scheme to train our model. Ideally, we need a training set consisting of pairs of talking-face videos which have different lip motions and the same other attributes (including identity and head motion) to train our model. However, it is difficult to obtain this kind of training dataset of real videos because the condition is too strict: even in a real talking face video without head movement, it is hard to extract two frames with exactly the same head pose. We take an alternative approach that synthesizes videos and pairs them with real videos to build this kind of dataset. Some talking face generation methods~\cite{Wav2Lip} can generate a talking-face video from a reference video and an audio, where the generated video has the same identity and head motion as the reference video. We use the method~\cite{Wav2Lip} to generate a new training dataset by the following steps: (1) we collect $N_r=550$ real talking-face videos $\{V_1^0, V_2^0, \dots, V_{N_r}^0\}$ from video websites and collect $m=550$ audios from talking-face videos, whose lengths are about 60 seconds; (2) for each real video $V_i^0, i=1, 2, \dots, N_r$, we use the method~\cite{Wav2Lip} to generate $m$ videos $V_i^{1}, V_i^{2}, \dots, V_i^{m}$ which has the same identity and head motion with the video $V_i^0$ and different lip motions; (3) we combine $V_i^0$ with $V_i^j$ as a pair of videos and then we have $mN_r$ pairs of videos.

We use the dataset obtained above (including real videos and synthetic videos) to train our model. In the training set, all talking-face videos have their corresponding audios and we denote their relation by an operator $A(V): \mathcal{V} \rightarrow \mathcal{A}$ which maps a talking-face video $V$ to its corresponding audio $A$. For each batch, we randomly select a pair of videos $V_1, V_2$ and their corresponding audios $A_1=A(V_1), A_2=A(V_2)$. We use our model to generate video $f(V_1, A_2)$ from $V_1$ and $A_2$ and $f(V_2, A_1)$ from $V_2$ and $A_1$.

\textbf{Reconstruction Loss.} We consider $f(V_1, A_2), f(V_2, A_1)$ are generation results and $V_2, V_1$ are their ground truth. Ideally, the generation results should be exactly the same as the ground truth. We use reconstruction loss to constrain our model to generate talking face videos similar to the ground truth. The loss term is calculated as the $L_1$ norm of the difference between generation results and the ground truth, i.e.,
\begin{equation}
\begin{array}{l}
L_{rec}(f) \\= \mathbb{E}_{V_1, V_2 \sim \mathcal{V}}(\lVert f(V_1, A_2) - V_2 \rVert_1 + \lVert f(V_2, A_1) - V_1 \rVert_1).
\end{array}
\label{eq:rec-loss}
\end{equation}

\textbf{Adversarial Loss.} We use adversarial loss to ensure that $f(\mathcal{V}, \mathcal{A})$ has the same distribution as $\mathcal{V}$, which can improve the quality of generation results. We adapt the adversarial loss of LSGAN~\cite{mao2017least} as:
\begin{equation}
\begin{array}{l}
L_{adv}(f, D) \\= \mathbb{E}_{V_1, V_2 \sim \mathcal{V}}(\lVert D(f(V_1, A_2))\rVert_2^2) + \mathbb{E}_{V \sim \mathcal{V}}(\lVert 1 - D(V)\rVert_2^2).
\end{array}
\end{equation}

The overall loss function is in the following form:
\begin{equation}
\begin{array}{l}
L_{total}(f, D) = L_{adv}(f, D) + \lambda_{rec}L_{rec}(f),
\end{array}
\end{equation}
where $\lambda_{rec}$ is the weight for balancing the multiple objectives. For all experiments, we set $\lambda_{rec} = 10$. The optimization target is:
\begin{equation}
\min_{f} \max_{D} L_{total}(f, D).
\end{equation}

\subsection{Blending}

\begin{figure*}[htb]
  \centering
  \includegraphics[width=\textwidth]{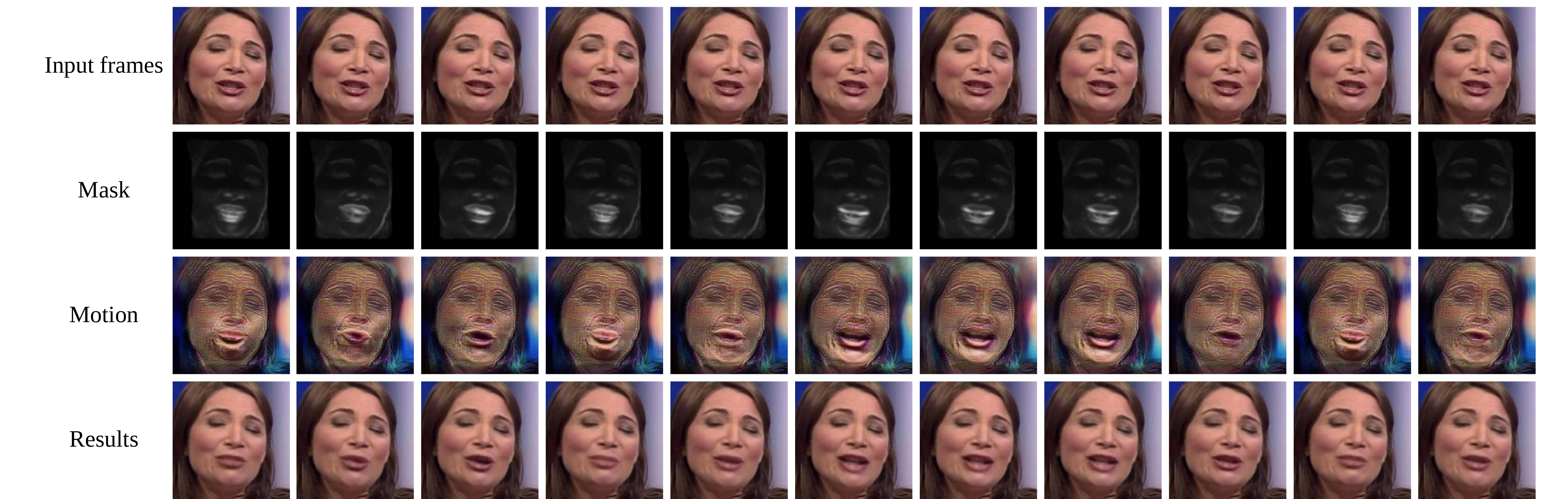}
  \caption{
\yzp{Results of attention masks and motion images generated by our method. The audio input is 'for many of us', and the video input is obtained by repeating a face photo multiple times.}  }
 \label{fig:mask_color_show}
\end{figure*}

Instead of directly generating the synthetic frames, the output of our method (shown in Fig.~\ref{fig:mask_color_show}) is an attention mask $\alpha$ which is a grayscale image, and a motion image $M$ which is a color image that presents the change. $\alpha$ determines at each pixel how much the output should be influenced by the motion image $M$.
Denote by $I$ the input image and $I'$ the synthetic image, we have:
\begin{equation}
I' = I \otimes (1 - \alpha) + M \otimes \alpha,
\end{equation}
where $\otimes$ is pixel-wise multiplication.

Compared with directly generating the synthetic frames, our method has the following advantages. It can not only enforce the network to focus on audiovisual-correlated regions but also offers an efficient post-processing to produce desired output, avoiding expensive image fusion. It also increases the interpretability of the network and we can be informed of where the network focuses. In practice, it is difficult to train the network with DCKs by directly generating the synthetic frames. In Figure~\ref{fig:ablation_study}, the generation results of directly generating the synthetic frames have different skin color from inputs whereas those of blending have the same skin color as inputs.

\begin{figure}[ht]
\small
\centering
\includegraphics[width=0.49\textwidth]{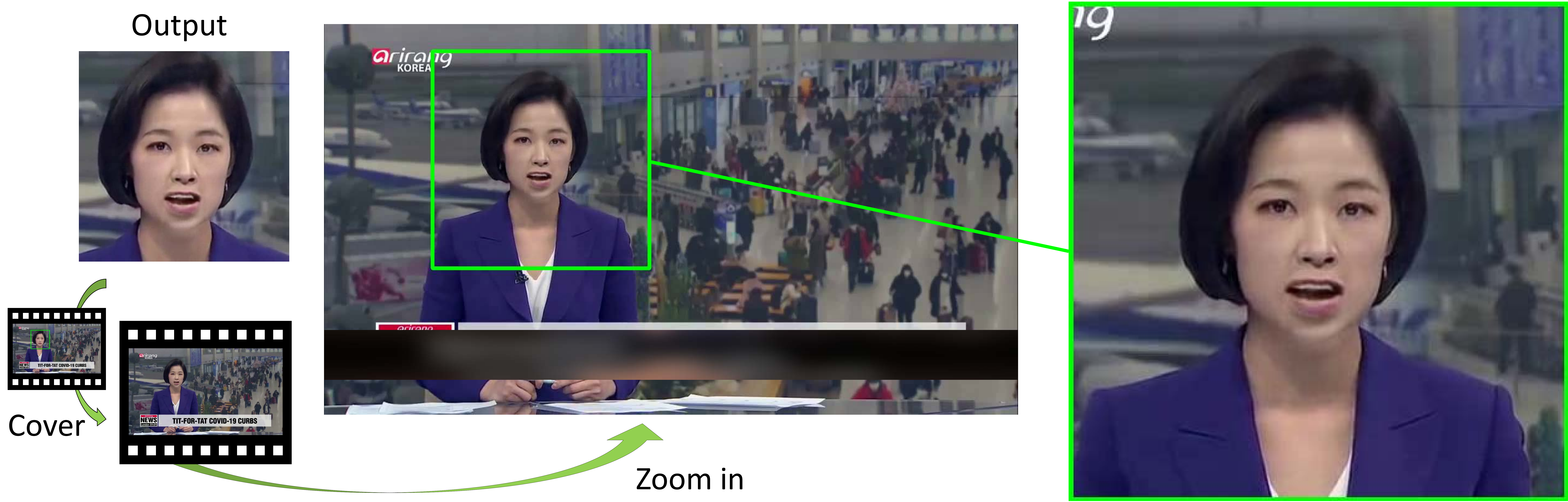}
\caption{Our system directly covers the generated results onto the background instead of image fusion. No visible artifacts are observed on the boundary between the face and background.}
\label{fig:background}
\end{figure}

{
\subsection{Adding Background in Real Time}

Our system does not need to use image fusion to integrate the generated results with the background, which usually takes a long time. We directly cover the generated results onto the background instead of image fusion and this helps save time. The boundaries of generation results are the same as those of inputs, because we do not change the head pose and non-face area. Due to the attention mechanism and the mask loss, the direct covering has good performance and there are no visible artifacts on the boundary between the face and background. A frame with background generated by direct covering is shown in Figure~\ref{fig:background}. 

As a comparison, other state-of-the-art methods (e.g., \cite{ChenMDX19, ChungJZ17, Wav2Lip, WilesKZ18, ZhouLLLW19, VougioukasPP18, yi2020audio, song2020everybody, thies2020neural, wen2020photorealistic}) either cannot keep the head pose or change the non-face area, and therefore, the face and background have different color values on the boundary. 
We note that in these methods, directly covering would cause inconsistency between the generated region and background. Therefore, they 
cannot adopt our directly covering scheme as efficient as ours; e.g., Wav2Lip~\cite{Wav2Lip} also generates the facial region and uses the direct covering to add background, but as we will show in Section \ref{subsec:stoa}, their results exhibit clear boundaries between the generated region and background.
}

\section{Theoretical Interpretation for DCKs}
\label{sec:DCK-interpretation}

We can understand the dynamic convolution kernels (DCKs) in the following way. Denote by $\mathcal{T}$ the space of tasks such as all expressions. For a fixed task $t \in \mathcal{T}$ such as smiling, we can train a network to transfer an image to a new image with smile. For different tasks in $\mathcal{T}$, we can train different networks. We believe these networks are mostly the same with slight distinction. We use static convolution kernels to learn the common characteristics and use dynamic convolution kernels to learn the distinction. Therefore, the network with dynamic kernels can handle all tasks in the space $\mathcal{T}$.

We present the following formulations to provide a theoretical interpretation of the above statement. Some experimental validations are presented in Section \ref{subsec:DCK-validation}.

\subsection{Interpretation from set approximation}

\begin{assumption}
\label{assumption:one-for-all}
A multi-modal task can be fulfilled by solving a set $\mathcal{T}=\{t_i\}$ of simpler tasks, each of which $t_i$ can be fulfilled by a fully convolutional network $f^i$ with fixed parameters. All the networks $\{f^i\}$ have the same structure and most of their parameters are the same. 
\end{assumption}

Now we show that a single fully convolutional network with DCKs can well approximate the set of networks $\{f^i\}$ with bounded output errors.

\begin{lemma}
Let the activation function $g$ be any one of Leaky ReLU, $\tanh$ or Sigmoid. For any input $x^1,x^2\in\mathbb R^n$,
the following inequality holds:
\begin{equation}
\|g(x^1)-g(x^2)\|_p\leqslant\|x^1-x^2\|_p.
\end{equation}
\end{lemma}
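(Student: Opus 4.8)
The plan is to reduce the vector inequality to a scalar, coordinatewise Lipschitz estimate, since all three activations act componentwise. First I would establish that each activation function, viewed as a map $g:\mathbb R\to\mathbb R$, is $1$-Lipschitz, i.e.\ $|g(a)-g(b)|\leqslant|a-b|$ for all $a,b\in\mathbb R$. For $\tanh$ and Sigmoid this follows from the mean value theorem together with the derivative bounds $|\tanh'(t)|=1-\tanh^2 t\leqslant 1$ and $|\sigma'(t)|=\sigma(t)\bigl(1-\sigma(t)\bigr)\leqslant 1/4\leqslant 1$. For Leaky ReLU, which has the form $g(t)=t$ for $t\geqslant 0$ and $g(t)=\alpha t$ for $t<0$ with leak slope $\alpha\in(0,1)$, I would argue directly: it is continuous and piecewise linear with all one-sided slopes in $[\alpha,1]\subset[-1,1]$, so a short case analysis on the signs of $a$ and $b$ gives $|g(a)-g(b)|\leqslant|a-b|$.

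Next I would lift this to the vector setting by applying the scalar bound in each coordinate. Writing $x^1=(x^1_1,\dots,x^1_n)$, $x^2=(x^2_1,\dots,x^2_n)$ and using that $g$ acts componentwise, the previous step yields $|g(x^1_i)-g(x^2_i)|\leqslant|x^1_i-x^2_i|$ for every $i$. For $1\leqslant p<\infty$ I would then raise to the $p$-th power and sum:
\[
\|g(x^1)-g(x^2)\|_p^p=\sum_{i=1}^n|g(x^1_i)-g(x^2_i)|^p\leqslant\sum_{i=1}^n|x^1_i-x^2_i|^p=\|x^1-x^2\|_p^p,
\]
and take $p$-th roots; for $p=\infty$ the same coordinatewise bound gives $\max_i|g(x^1_i)-g(x^2_i)|\leqslant\max_i|x^1_i-x^2_i|$, which is exactly the claim.

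I do not expect a genuine obstacle: the only subtlety is the non-differentiability of Leaky ReLU at the origin, which I would sidestep with the elementary case analysis above rather than by invoking the mean value theorem. If a uniform treatment is preferred, an alternative is to write $g(t)=g(0)+\int_0^t g'(s)\,ds$ with $g'$ interpreted as the almost-everywhere derivative (bounded by $1$ in all three cases, Leaky ReLU included) and estimate $|g(a)-g(b)|=\bigl|\int_b^a g'(s)\,ds\bigr|\leqslant|a-b|$, handling the three activations simultaneously.
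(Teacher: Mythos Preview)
Your proof is correct and follows essentially the same strategy as the paper: establish that $g$ is $1$-Lipschitz on $\mathbb R$ and then apply this coordinatewise to get the $L_p$ bound. The paper phrases the scalar step slightly differently---it notes that both $g$ and $a\mapsto a-g(a)$ are nondecreasing, which together give $|g(a^1)-g(a^2)|\leqslant|a^1-a^2|$---but this is just another way of saying the (one-sided) derivatives lie in $[0,1]$, so your MVT/case-analysis argument amounts to the same thing, and your explicit lift to the vector case (including $p=\infty$) is more detailed than what the paper writes out.
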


\begin{proof}
For any $a\in\mathbb R$, $a-g(a)$ increases monotonically;
hence for $a^1\geqslant a^2, a^1,a^2\in\mathbb R$, we have $a^1-g(a^1)\geqslant a^2-g(a^2)$, i.e., $|a^1-a^2|=a^1-a^2 \geqslant g(a^1)-g(a^2)=|g(a^1)-g(a^2)|$. 
That completes the proof.
\end{proof}

Given two fully convolutional networks $f^1$ and $f^2$ corresponding to two sets of convolution kernels $\{k_1^1,k_2^1,\cdots,k_n^1\}$ and $\{k_1^2,k_2^2,\cdots,k_n^2\}$, respectively, where each pair of $(k_i^1,k_i^2)$ has the same kernel size, let $\{y_0^1,\cdots,y_n^1\}$ and $\{y_0^2,\cdots,y_n^2\}$ be the two sets of
intermediate results of two convolution networks $f^1$ and $f^2$, where $y_0^1=y_0^2=x$. Then we have:
\begin{equation}
y_i^j = g_i(\conv{i}{j}y_{i-1}^j),\quad i=1,2,\cdots,n,\quad j=1,2,
\end{equation}
where $*$ is the convolution operator and $g_i$ is the activation function.
The following theorem gives an upper bound of the difference between the two outputs $f^1(x),f^2(x)$ in terms of the $L_p$ norm.

\begin{theorem}
\label{theorem:bound}
If all convolution
kernels have a uniform upper bound of their $L_p$ norm, i.e., $\|k_i^j\|_p\leqslant M_p$ for $\forall i,j$ and some $M_p>0$, the following inequality holds:
\begin{equation}
\|y_n^1-y_n^2\|_p\leqslant M_p^{n-1}\|x\|_p\sum_{i=1}^n\|k_i^1-k_i^2\|_p.
\label{eq:the1}
\end{equation}
\end{theorem}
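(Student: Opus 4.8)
The plan is to prove the bound by induction on the layer index $i$, establishing the stronger intermediate claim that
\[
\|y_i^1 - y_i^2\|_p \leqslant M_p^{i-1}\|x\|_p \sum_{\ell=1}^{i}\|k_\ell^1 - k_\ell^2\|_p
\]
for each $i = 1, 2, \dots, n$, and then specializing to $i = n$. Before starting the induction I would record two elementary facts about the convolution operator. First, convolution is linear in the kernel, so $\conv{i}{1}y - \conv{i}{2}y = (k_i^1 - k_i^2)^{\star}y$; this lets me split differences of the form $\conv{i}{1}y_{i-1}^1 - \conv{i}{2}y_{i-1}^2$ by adding and subtracting the mixed term $\conv{i}{1}y_{i-1}^2$. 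Second, I need a submultiplicative estimate $\|k^{\star}y\|_p \leqslant \|k\|_p\,\|y\|_p$ for the $L_p$ norm of a convolution; this is the Young-type inequality for discrete convolution and is where the hypothesis $\|k_i^j\|_p \leqslant M_p$ will be used. I would also invoke the already-proved Lemma that the activation functions are $1$-Lipschitz in $L_p$, i.e. $\|g(x^1) - g(x^2)\|_p \leqslant \|x^1 - x^2\|_p$, which lets me strip the $g_i$ off the front of each layer's difference at no cost.

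The inductive step proceeds as follows. Using the $1$-Lipschitz property of $g_i$,
\[
\|y_i^1 - y_i^2\|_p = \|g_i(\conv{i}{1}y_{i-1}^1) - g_i(\conv{i}{2}y_{i-1}^2)\|_p \leqslant \|\conv{i}{1}y_{i-1}^1 - \conv{i}{2}y_{i-1}^2\|_p.
\]
Inserting $\pm\,\conv{i}{1}y_{i-1}^2$ and applying the triangle inequality bounds this by $\|\conv{i}{1}(y_{i-1}^1 - y_{i-1}^2)\|_p + \|(k_i^1 - k_i^2)^{\star}y_{i-1}^2\|_p$, which by the convolution norm inequality is at most $M_p\,\|y_{i-1}^1 - y_{i-1}^2\|_p + \|k_i^1 - k_i^2\|_p\,\|y_{i-1}^2\|_p$. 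For this to close the induction I need a bound $\|y_{i-1}^2\|_p \leqslant M_p^{i-1}\|x\|_p$, which follows from a separate and even simpler induction: $\|y_j^2\|_p = \|g_j(\conv{j}{2}y_{j-1}^2)\|_p \leqslant \|\conv{j}{2}y_{j-1}^2\|_p \leqslant M_p\|y_{j-1}^2\|_p$ (using that $g_j(0) = 0$ for each allowed activation, so $\|g_j(z)\|_p = \|g_j(z) - g_j(0)\|_p \leqslant \|z\|_p$), iterated from $y_0^2 = x$. Substituting the inductive hypothesis for $\|y_{i-1}^1 - y_{i-1}^2\|_p$ and this growth bound, the two terms combine — after factoring out $M_p^{i-1}\|x\|_p$ — into $M_p^{i-1}\|x\|_p\bigl(\sum_{\ell=1}^{i-1}\|k_\ell^1 - k_\ell^2\|_p + \|k_i^1 - k_i^2\|_p\bigr)$, which is exactly the claim at level $i$. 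The base case $i = 1$ is the inductive step with $y_0^1 = y_0^2 = x$, so the first term vanishes and one is left with $\|k_1^1 - k_1^2\|_p\,\|x\|_p$, matching the formula (with the convention $M_p^{0} = 1$).

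The main obstacle, and the only point requiring genuine care rather than bookkeeping, is the convolution norm inequality $\|k^{\star}y\|_p \leqslant \|k\|_p\|y\|_p$ and the precise convention under which it holds. Depending on how "$L_p$ norm of a kernel" is defined (entrywise $\ell_p$ versus operator norm) and how boundary padding and multi-channel structure are handled, one gets either this clean submultiplicative bound or a constant depending on the kernel support size; I would either state the convolution as zero-padded discrete convolution so that discrete Young's inequality applies verbatim in the $\ell_1$–$\ell_p$ form (noting $\|k\|_1$ is controlled once $\|k\|_p$ and the fixed finite support are), or simply take the submultiplicativity of the chosen matrix/operator norm as the working definition. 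I should also double-check the activation list: Leaky ReLU, $\tanh$, and Sigmoid are all $1$-Lipschitz, but Sigmoid does not satisfy $g(0) = 0$, so the growth bound $\|y_j^2\|_p \leqslant M_p^{j}\|x\|_p$ must instead be obtained from $\|g_j(z)\|_p \leqslant \|g_j(z) - g_j(0)\|_p + \|g_j(0)\|_p \leqslant \|z\|_p + \|g_j(0)\|_p$, which introduces an additive constant; in that case I would either restrict the theorem's growth lemma to the ReLU/$\tanh$ case, absorb the constant by a mild normalization assumption, or present the estimate with the understanding that intermediate activations are the (zero-fixing) ones. Everything else is routine triangle-inequality manipulation.
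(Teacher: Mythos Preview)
Your proposal is correct and follows essentially the same route as the paper's proof: induction on the layer index, stripping the activation via the $1$-Lipschitz lemma, the add-and-subtract of the mixed term $\conv{i}{1}y_{i-1}^2$, the submultiplicative estimate $\|k^{\star}y\|_p\leqslant\|k\|_p\|y\|_p$, and the auxiliary growth bound $\|y_{i-1}^2\|_p\leqslant M_p^{i-1}\|x\|_p$. If anything you are more scrupulous than the paper, which silently assumes $\|g(z)\|_p\leqslant\|z\|_p$ (problematic for Sigmoid, as you note) and invokes the convolution norm bound as ``Cauchy inequality'' without discussing conventions.
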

\begin{proof}
We prove this theorem by induction. First we consider the case $n=1$, i.e., there is only 
one convolution kernel for each set. By calculating the $L_p$ loss, we have
\begin{equation*}
\begin{array}{l}
\|y_1^1-y_1^2\|_p = \|g(k_1^{1*}x)-g(k_1^{2*}x)\|_p \\
\leqslant \|\conv{1}{1}x-\conv{1}{2}x\|_p = \|(k_1^1-k_1^2)^*x\|_p \\
\leqslant \|k_1^1-k_1^2\|_p\cdot\|x\|_p,
\end{array}
\end{equation*}
where the last inequality comes directly from the Cauchy inequality.
Now, suppose the inequality (\ref{eq:the1}) holds for $n\leqslant m-1$. For $n=m$, we have
\begin{align*}
&\|y_m^1-y_m^2\|_p \\
= &\|g_m(\conv{m}{1}y_{m-1}^1)-g_m(\conv{m}{2}y_{m-1}^2)\|_p \\
\leqslant &\|(\conv{m}{1}y_{m-1}^1-\conv{m}{2}y_{m-1}^2)\|_p \\
\leqslant &\|\conv{m}{1}y_{m-1}^1-\conv{m}{1}y_{m-1}^2\|_p + \|\conv{m}{1}y_{m-1}^2-\conv{m}{2}y_{m-1}^2\|_p \\
\leqslant &\|k_m^1\|_p\cdot\|y_{m-1}^1-y_{m-1}^2\|_p + \|k_m^1-k_m^2\|_p\cdot\|y_{m-1}^2\|_p \\
\leqslant &M_p\cdot M_p^{m-2}\|x\|_p\sum_{i=1}^{m-1}\|k_i^1-k_i^2\||_p \\
&+ \|k_m^1-k_m^2\|_p\|x\|_p\prod_{i=1}^{m-1}\|k_i^2\|_p \\
\leqslant &M_p^{m-1}\|x\|_p\sum_{i=1}^m\|k_i^1-k_i^2\||_p
\end{align*}
That completes the proof.
\end{proof}

In practice, the constant $M_p$ is usually small, e.g., in all experiments in Section \ref{sec:experiment}, $M_p=0.4559$. Then 
Theorem \ref{theorem:bound} says that for two networks $f^1$ and $f^2$ with a fixed number $n$ of convolution layers,
\begin{equation}
    \|f^1(x)-f^2(x)\|_p\leqslant C_p\|x\|_p\sum_{i=1}^n\|k_i^1-k_i^2\|_p,
\end{equation}
where $C_p$ is a constant independent of the input $x$ and the convolution networks.

Note that although all the networks in the set $\{f^i\}$ have the same structure and most of their parameters are the same, the remaining parameters can be significantly different. Then any fully convolutional network with fixed parameters cannot well approximate all the networks in $\{f^i\}$. Let $f\in\{f^i\}$ and $f'$ be a fully convolutional network with DCKs which are inferred from the audio modality. If the inference makes the parameters in DCKs well approximate the parameters in the corresponding layers of $f$, $f'$ can well approximate any $f$ in $\{f^i\}$.

\subsection{Interpretation from loss values}

The value of the objective loss function can reflect the quality of generation results of the system to some extent. Next we present the error bounds of \yzp{two} loss terms in our objective function, showing the loss value is approximately optimal\footnote{Assume that the optimal loss value is provided by a network $f\in\{f^i\}$.} using a network with DCKs. \yzp{In addition, we also present the error bounds of cycle loss~\cite{zhu2017unpaired}, which is a useful loss term. }

\begin{corollary}
Let $f^j_D$, $j=1,2$, be the discriminator network, which is a fully convolutional network. Let $\{k_{n+1}^j,\cdots,k_{n+r}^j\}$ be the convolution kernels of the discriminator
$D^j$ (Here, the indexes follow those of the generator). The adversarial loss can be expressed as:
\begin{align*}
&L_{adv}(f^j,D^j) \\
=& \mathbb E_{x\sim\mathcal X}(\|D^j(f^j(x))\|_2^2)+\mathbb E_{x\sim\mathcal X}(\|1-D^j(x)\|_2^2) \\
=& \mathbb E_{x\sim\mathcal X}(\|f_D^j\circ f^j(x)\|_2^2)+\mathbb E_{x\sim\mathcal X}(\|1-f_D^j(x)\|_2^2).
\end{align*}
Then there exist constants $A_1,A_2>0$, such that the following inequality holds:
\begin{align*}
&|L_{adv}(f^1,D^1)-L_{adv}(f^2,D^2)|\\
\leqslant &\left(A_1\left(DK_2\right)^2+A_2 \cdot DK_2\right)\mathbb E_{x\sim\mathcal X}(\|x\|_2^2),
\end{align*}
where $DK_2 = \sum_{i=1}^{n+r}\|k_i^1-k_i^2\|_2$.
\end{corollary}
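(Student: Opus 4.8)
The plan is to reduce the corollary to Theorem~\ref{theorem:bound} by using the fact that a composition of fully convolutional networks is again a fully convolutional network. Indeed $f_D^j\circ f^j$ is a fully convolutional network whose $n+r$ kernels are $k_1^j,\dots,k_{n+r}^j$, so Theorem~\ref{theorem:bound} applies verbatim and gives $\|f_D^1\circ f^1(x)-f_D^2\circ f^2(x)\|_2\leqslant M_2^{\,n+r-1}\|x\|_2\,DK_2$. The same theorem applied to the discriminators alone --- a fully convolutional network with the $r$ kernels $k_{n+1}^j,\dots,k_{n+r}^j$ --- yields $\|f_D^1(x)-f_D^2(x)\|_2\leqslant M_2^{\,r-1}\|x\|_2\sum_{i=n+1}^{n+r}\|k_i^1-k_i^2\|_2\leqslant M_2^{\,r-1}\|x\|_2\,DK_2$, since the sum there is a subsum of $DK_2$. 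I would also record the a priori bounds $\|f_D^j\circ f^j(x)\|_2\leqslant M_2^{\,n+r}\|x\|_2$ and $\|f_D^j(x)\|_2\leqslant M_2^{\,r}\|x\|_2$, which are exactly the single-network estimates already used inside the induction proving Theorem~\ref{theorem:bound} (Cauchy's inequality together with the $1$-Lipschitz property of the activations established in the lemma above).

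The next step is to pass from these norm estimates to the squared quantities in $L_{adv}$, via the elementary identity $\|u+\delta\|_2^2-\|u\|_2^2=2\langle u,\delta\rangle+\|\delta\|_2^2$, which gives $\bigl|\,\|u+\delta\|_2^2-\|u\|_2^2\,\bigr|\leqslant 2\|u\|_2\|\delta\|_2+\|\delta\|_2^2$. Taking $u=f_D^1\circ f^1(x)$ and $\delta=f_D^2\circ f^2(x)-f_D^1\circ f^1(x)$, the first term of $L_{adv}$ contributes at most $2M_2^{\,2(n+r)-1}\|x\|_2^2\,DK_2+M_2^{\,2(n+r)-2}\|x\|_2^2\,(DK_2)^2$; taking $u=1-f_D^1(x)$ and $\delta=f_D^1(x)-f_D^2(x)$, the second term contributes at most $2\bigl(\|1\|_2+M_2^{\,r}\|x\|_2\bigr)M_2^{\,r-1}\|x\|_2\,DK_2+M_2^{\,2r-2}\|x\|_2^2\,(DK_2)^2$, where $\|1\|_2$ denotes the (fixed) norm of the all-ones target vector. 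Summing and using the triangle inequality bounds $|L_{adv}(f^1,D^1)-L_{adv}(f^2,D^2)|$ by $\mathbb E_{x\sim\mathcal X}$ of the sum of these two expressions; observe that the dependence quadratic in $DK_2$ is produced exactly by the remainder $\|\delta\|_2^2$, not by any linearization.

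It remains to collect constants. Every summand is already of the form $(\text{const})\,\|x\|_2^2\,(DK_2)^2$ or $(\text{const})\,\|x\|_2^2\,DK_2$ except the single piece $2\|1\|_2M_2^{\,r-1}\|x\|_2\,DK_2$; since $\mathcal X$ is a bounded set of natural images whose elements have norm bounded below by some $c>0$, one has $\|x\|_2\leqslant c^{-1}\|x\|_2^2$ pointwise, so this piece too becomes a constant multiple of $\mathbb E_{x\sim\mathcal X}\|x\|_2^2$. Gathering the coefficients of $(DK_2)^2$ into $A_1$ and those of $DK_2$ into $A_2$ gives the claimed inequality. I expect the main obstacle to be the bookkeeping in the second step --- isolating precisely where the $(DK_2)^2$ term arises and checking that the a priori output bounds remain valid when the activation is Sigmoid, which is not $0$ at the origin; both are handled by working over the bounded domain $\mathcal X$ and absorbing the resulting additive constants into $A_1$ and $A_2$.
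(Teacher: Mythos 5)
Your proof is correct and follows essentially the same route as the paper's: regard $f_D^j\circ f^j$ (and $f_D^j$ alone) as fully convolutional networks so that Theorem~\ref{theorem:bound} supplies the Lipschitz-type bounds, expand the difference of squared norms into a linear-plus-quadratic expression in $DK_2$, and absorb all coefficients into $A_1,A_2$. Your explicit handling of the $\|1\|_2$ term --- noting that folding it into $\mathbb E_{x\sim\mathcal X}(\|x\|_2^2)$ requires $\|x\|_2$ bounded below --- merely makes precise a step the paper hides inside its asserted bound $\|1-f_D^2(x)\|_2\leqslant C_2\|x\|_2$.
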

\begin{proof}
We can regard the discriminator $D^j$ as another convolution network together with convolution
kernels $\{k_{n+1}^j,\cdots,k_{n+r}^j\}$ and activation functions $g_{n+1},\cdots,g_{n+m}$. Notice
that there exist $C_1,C_2>0$, such that the inequalities hold:
$$\|f_D^2\circ f^2(x)\|_2\leqslant C_1\|x\|_2,\ \mbox{and}$$
$$\|1-f_D^2(x)\|_2\leqslant C_2\|x\|_2.$$
We denote
\begin{align*}
&\|f_D^1\circ f^1(x)-f_D^2\circ f^2(x)\|_2\leqslant\Delta_1\|x\|_2,\\
&\quad\|f_D^1(x)-f_D^2(x)\|_2\leqslant\Delta_2\|x\|_2,
\end{align*}
where
\begin{align*}
\Delta_1 &\propto \sum_{i=1}^{n+r}\|k_i^1-k_i^2\|_2,\\
\Delta_2 &\propto \sum_{i=1}^{r} \|k_{n+i}^1-k_{n+i}^2\|_2.
\end{align*}
Then we have
\begin{align*}
&|L_{adv}(f^1,D^1)-L_{adv}(f^2,D^2)| \\
\leqslant& \mathbb E_{x\sim\mathcal X}(|\|f_D^1\circ f^1(x)\|_2^2-\|f_D^2\circ f^2(x)\|_2^2|)\\
&+\mathbb E_{x\sim\mathcal X}(|\|1-f_D^1(x)\|_2^2-\|1-f_D^2(x)\|_2^2|) \\
\leqslant& \mathbb E_{x\sim\mathcal X}\left((\|f_D^2\circ f^2(x)\|_2+\Delta_1\|x\|_2)^2-\|f_D^2\circ f^2(x)\|_2^2\right) \\
&+ \mathbb E_{x\sim\mathcal X}\left((\|1-f_D^2(x)\|_2+\Delta_2\|x\|_2)^2-\|1-f_D^2(x)\|_2^2\right) \\
\leqslant& \mathbb E_{x\sim\mathcal X}(((C_1+\Delta_1)^2-C_1^2)\|x\|_2^2)\\
&+\mathbb E_{x\sim\mathcal X}(((C_2+\Delta_2)^2-C_2^2)\|x\|_2^2) \\
=& 2(C_1\Delta_1+C_2\Delta_2)\mathbb E_{x\sim\mathcal X}(\|x\|_2^2)\\
&+(\Delta_1^2+\Delta_2^2)\mathbb E_{x\sim\mathcal X}(\|x\|_2^2) \\
\leqslant& \left(A_1\left(DK_2\right)^2+A_2 \cdot DK_2\right)\mathbb E_{x\sim\mathcal X}(\|x\|_2^2),
\end{align*}
where $A_1,A_2>0$ are two constants. That completes the proof.
\end{proof}

\begin{corollary}
Let $f_P$ be a fully convolutional network with kernels $\{k_{1},\cdots,k_{l}\}$. The perceptual loss, including the VGG loss~\cite{simonyan2014very} and LPIPS loss~\cite{zhang2018unreasonable}, can be expressed as:
\begin{equation*}
L_{pcpt}(f^j, f_P) = \mathbb E_{x\sim\mathcal X}(\|f_P(f^j(x)) - f_P(y)\|_1),
\end{equation*}
where $y$ is the ground truth. Then there exist constants $C_{pcpt}>0$ such that the following inequality holds:
\begin{align*}
&|L_{pcpt}(f^1, f_P) - L_{pcpt}(f^2, f_P)|\\
\leqslant &C_{pcpt} (\sum_{i=1}^n\|k_i^1-k_i^2\|_1)E_{x\sim\mathcal X}(\|x\|_1).
\end{align*}
\yzp{Specially, if $f_P$ degenerates into an identity transformation, the perceptual loss degenerates into $L_1$ loss in Eq.~\ref{eq:rec-loss}.}
\end{corollary}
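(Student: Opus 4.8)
The plan is to obtain the bound as a direct corollary of Theorem~\ref{theorem:bound}, by viewing the perceptual loss as the reconstruction error of an enlarged convolutional network. First I would observe that, since $f_P$ is itself a fully convolutional network, the composition $f_P\circ f^j$ is again fully convolutional: its kernel list is the concatenation $\{k_1^j,\dots,k_n^j,k_1,\dots,k_l\}$ and its normalisation/activation blocks are those of $f^j$ followed by those of $f_P$. The activations used by the standard perceptual backbones (VGG, the LPIPS network) are ReLU, which is the zero-slope special case of Leaky ReLU, so Lemma~1 — and hence Theorem~\ref{theorem:bound} with $p=1$, after possibly enlarging the uniform kernel bound $M_1$ so that it also dominates the kernels of $f_P$ — applies verbatim to these composed networks.

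Next I would strip off the ground-truth term inside the expectation using the reverse triangle inequality: for every fixed $x$,
\begin{equation*}
\bigl|\,\|f_P(f^1(x))-f_P(y)\|_1-\|f_P(f^2(x))-f_P(y)\|_1\,\bigr|\leqslant\|f_P(f^1(x))-f_P(f^2(x))\|_1,
\end{equation*}
so that, taking expectations, $|L_{pcpt}(f^1,f_P)-L_{pcpt}(f^2,f_P)|\leqslant\mathbb E_{x\sim\mathcal X}\bigl(\|f_P(f^1(x))-f_P(f^2(x))\|_1\bigr)$.

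Then I would apply Theorem~\ref{theorem:bound} to the two composed networks $f_P\circ f^1$ and $f_P\circ f^2$. Since they share the last $l$ kernels $k_1,\dots,k_l$, the corresponding differences vanish and the kernel sum in the bound collapses to $\sum_{i=1}^n\|k_i^1-k_i^2\|_1$, giving $\|f_P(f^1(x))-f_P(f^2(x))\|_1\leqslant M_1^{\,n+l-1}\|x\|_1\sum_{i=1}^n\|k_i^1-k_i^2\|_1$. Taking expectations and setting $C_{pcpt}=M_1^{\,n+l-1}$ yields the claimed inequality. The degenerate case is immediate: when $f_P=\id$ the composed network is just $f^j$, so $L_{pcpt}(f^j,f_P)=\mathbb E_{x\sim\mathcal X}(\|f^j(x)-y\|_1)$, which is exactly the per-sample reconstruction term of Eq.~\ref{eq:rec-loss}.

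I do not anticipate a genuine obstacle; the argument is essentially bookkeeping layered on top of Theorem~\ref{theorem:bound}. The two points that deserve a little care are (i) confirming that the hypothesis of Lemma~1 covers the ReLU activation of the perceptual network (it does, since $a-\mathrm{ReLU}(a)$ is non-decreasing), and (ii) writing the concatenated kernel list of $f_P\circ f^j$ explicitly enough that the shared perceptual kernels visibly drop out of the sum — this is precisely why the right-hand side involves only $\sum_{i=1}^n$ rather than $\sum_{i=1}^{n+l}$.
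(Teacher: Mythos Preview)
Your proposal is correct and is precisely the expanded version of what the paper records as its proof: the paper simply states ``It is a direct consequence of Theorem~\ref{theorem:bound},'' and your reverse-triangle-inequality step followed by applying Theorem~\ref{theorem:bound} to the composed network $f_P\circ f^j$ (with the shared perceptual kernels dropping out of the sum) is exactly how that consequence is realised.
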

\begin{proof}
It is a direct consequence of Theorem~\ref{theorem:bound}.
\end{proof}

\begin{corollary}
Let $f_{\mathcal X\rightarrow\mathcal Y}^j$ and $f_{\mathcal Y\rightarrow\mathcal X}^j$, $j=1,2$, be the generators from the domain $\mathcal X$ to the domain $\mathcal Y$ and from $\mathcal Y$ to $\mathcal X$, respectively, which are fully convolutional networks. The cycle loss term~\cite{zhu2017unpaired} can be expressed by
\begin{equation}
L_{cyc}(f^j)=\mathbb E_{x\sim\mathcal X}(\|f^j_{\mathcal Y\rightarrow\mathcal X}(f^j_{\mathcal X\rightarrow\mathcal Y}(x)))-x\|_1).
\end{equation}
Then by considering $f^j_{\mathcal Y\rightarrow\mathcal X}\circ f^j_{\mathcal X\rightarrow\mathcal Y}$
as a $2n$-layer fully convolutional network, we have:
\begin{align*}
&|L_{cyc}(f^1)-L_{cyc}(f^2)|\leqslant\\
&C_{cyc}\left(\sum_{i=1}^n(\|Dk_{i,\mathcal X\rightarrow\mathcal Y}\|_1+\|Dk_{i,\mathcal Y\rightarrow\mathcal X}\|_1)\right)\mathbb E_{x\sim\mathcal X}(\|x\|_1),
\end{align*}
where $C_{cyc}$ is a constant, $Dk_{i,\mathcal X\rightarrow\mathcal Y} = k_{i,\mathcal X\rightarrow\mathcal Y}^1-k_{i,\mathcal X\rightarrow\mathcal Y}^2$ and $Dk_{i,\mathcal Y\rightarrow\mathcal X} = k_{i,\mathcal Y\rightarrow\mathcal X}^1-k_{i,\mathcal Y\rightarrow\mathcal X}^2$.
\end{corollary}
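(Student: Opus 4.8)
The plan is to reduce the cycle-loss bound to a direct application of Theorem~\ref{theorem:bound}, exactly as the perceptual-loss corollary does, but with two layers of composition. First I would observe that $f^j_{\mathcal Y\rightarrow\mathcal X}\circ f^j_{\mathcal X\rightarrow\mathcal Y}$ is itself a fully convolutional network: concatenate the $n$ kernels of $f^j_{\mathcal X\rightarrow\mathcal Y}$ followed by the $n$ kernels of $f^j_{\mathcal Y\rightarrow\mathcal X}$ to obtain a single $2n$-layer network with kernel list $(k_{1,\mathcal X\rightarrow\mathcal Y}^j,\dots,k_{n,\mathcal X\rightarrow\mathcal Y}^j,k_{1,\mathcal Y\rightarrow\mathcal X}^j,\dots,k_{n,\mathcal Y\rightarrow\mathcal X}^j)$. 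The activation/normalization blocks $g_i$ between layers are unchanged (the $1$-Lipschitz property from the Lemma still applies to each), so Theorem~\ref{theorem:bound} with $L_1$ norm and $2n$ layers gives, for a uniform kernel-norm bound $M_1$,
\begin{equation*}
\|f^1_{\mathcal Y\rightarrow\mathcal X}(f^1_{\mathcal X\rightarrow\mathcal Y}(x))-f^2_{\mathcal Y\rightarrow\mathcal X}(f^2_{\mathcal X\rightarrow\mathcal Y}(x))\|_1\leqslant M_1^{2n-1}\|x\|_1\sum_{i=1}^n(\|Dk_{i,\mathcal X\rightarrow\mathcal Y}\|_1+\|Dk_{i,\mathcal Y\rightarrow\mathcal X}\|_1).
\end{equation*}

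Next I would pass from this pointwise (in $x$) bound to the loss-difference bound. Writing $F^j(x) = f^j_{\mathcal Y\rightarrow\mathcal X}(f^j_{\mathcal X\rightarrow\mathcal Y}(x))$, we have $L_{cyc}(f^j) = \mathbb E_{x\sim\mathcal X}(\|F^j(x)-x\|_1)$, so by the reverse triangle inequality $|\,\|F^1(x)-x\|_1 - \|F^2(x)-x\|_1\,| \leqslant \|F^1(x)-F^2(x)\|_1$, and then Jensen/monotonicity of expectation gives
\begin{equation*}
|L_{cyc}(f^1)-L_{cyc}(f^2)| \leqslant \mathbb E_{x\sim\mathcal X}(\|F^1(x)-F^2(x)\|_1) \leqslant M_1^{2n-1}\Big(\sum_{i=1}^n(\|Dk_{i,\mathcal X\rightarrow\mathcal Y}\|_1+\|Dk_{i,\mathcal Y\rightarrow\mathcal X}\|_1)\Big)\mathbb E_{x\sim\mathcal X}(\|x\|_1).
\end{equation*}
Setting $C_{cyc}=M_1^{2n-1}$ yields the claimed inequality. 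One technical point worth spelling out: Theorem~\ref{theorem:bound} as stated bounds $\|f^1(x)-f^2(x)\|_p$ for two networks on the \emph{same} input $x$, which is precisely what we need here since both composed networks are evaluated at the same $x$ before the cycle subtracts $x$; no issue arises from the intermediate value $f^1_{\mathcal X\rightarrow\mathcal Y}(x)$ versus $f^2_{\mathcal X\rightarrow\mathcal Y}(x)$ being different, because that discrepancy is already absorbed into the induction of Theorem~\ref{theorem:bound} when it is applied to the full $2n$-layer network.

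The main obstacle is bookkeeping rather than mathematics: one must make sure that the $L_1$-norm bound $M_1$ on all $2n$ kernels (both directions, both networks) is the constant being hidden in $C_{cyc}$, and that the ``$\propto$''/``there exist constants'' phrasing of the statement is honored — i.e., the constant genuinely depends only on $M_1$ and $n$, not on $x$ or on the particular kernels. A secondary subtlety is confirming that composing the two generators really does produce a network of the form covered by Theorem~\ref{theorem:bound}: the output activation $g_n$ of $f_{\mathcal X\rightarrow\mathcal Y}$ and the first operations of $f_{\mathcal Y\rightarrow\mathcal X}$ must chain into a valid sequence $g_1,\dots,g_{2n}$ of normalization-activation blocks, each $1$-Lipschitz; since the Lemma covers exactly the activation families in use and normalization layers are affine with controlled scale (or can be folded into the adjacent kernel's norm bound), this holds. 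Given Theorem~\ref{theorem:bound}, the rest is immediate, which is why the corollary can be stated with a one-line proof analogous to the perceptual-loss case.
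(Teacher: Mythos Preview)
Your proposal is correct and follows essentially the same route as the paper: apply the reverse triangle inequality inside the expectation to reduce $|L_{cyc}(f^1)-L_{cyc}(f^2)|$ to $\mathbb E_{x\sim\mathcal X}(\|F^1(x)-F^2(x)\|_1)$, then invoke Theorem~\ref{theorem:bound} on the composed $2n$-layer network. You in fact give more detail than the paper (making $C_{cyc}=M_1^{2n-1}$ explicit and flagging the activation-chaining check), but the structure of the argument is identical.
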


\begin{proof}
We have
\begin{align*}
&|L_{cyc}(f^1)-L_{cyc}(f^2)| \\
\leqslant& \mathbb E_{x\sim\mathcal X}(\big|\|f^1_{\mathcal Y\rightarrow\mathcal X}\circ f^1_{\mathcal X\rightarrow\mathcal Y}(x)-x\|_1 \\
&- \|f^2_{\mathcal Y\rightarrow\mathcal X}\circ f^2_{\mathcal X\rightarrow\mathcal Y}(x)-x\|_1\big|) \\
\leqslant& \mathbb E_{x\sim\mathcal X}(\|f^1_{\mathcal Y\rightarrow\mathcal X}\circ f^1_{\mathcal X\rightarrow\mathcal Y}(x)-f^2_{\mathcal Y\rightarrow\mathcal X}\circ f^2_{\mathcal X\rightarrow\mathcal Y}(x)\|_1) \\
\leqslant& C_{cyc}\left(\sum_{i=1}^n(\|Dk_{i,\mathcal X\rightarrow\mathcal Y}\|_1+\|Dk_{i,\mathcal Y\rightarrow\mathcal X}\|_1)\right)\mathbb E_{x\sim\mathcal X}(\|x\|_1).
\end{align*}
That completes the proof.
\end{proof}

\begin{figure*}[htb]
  \centering
  \includegraphics[width=\textwidth]{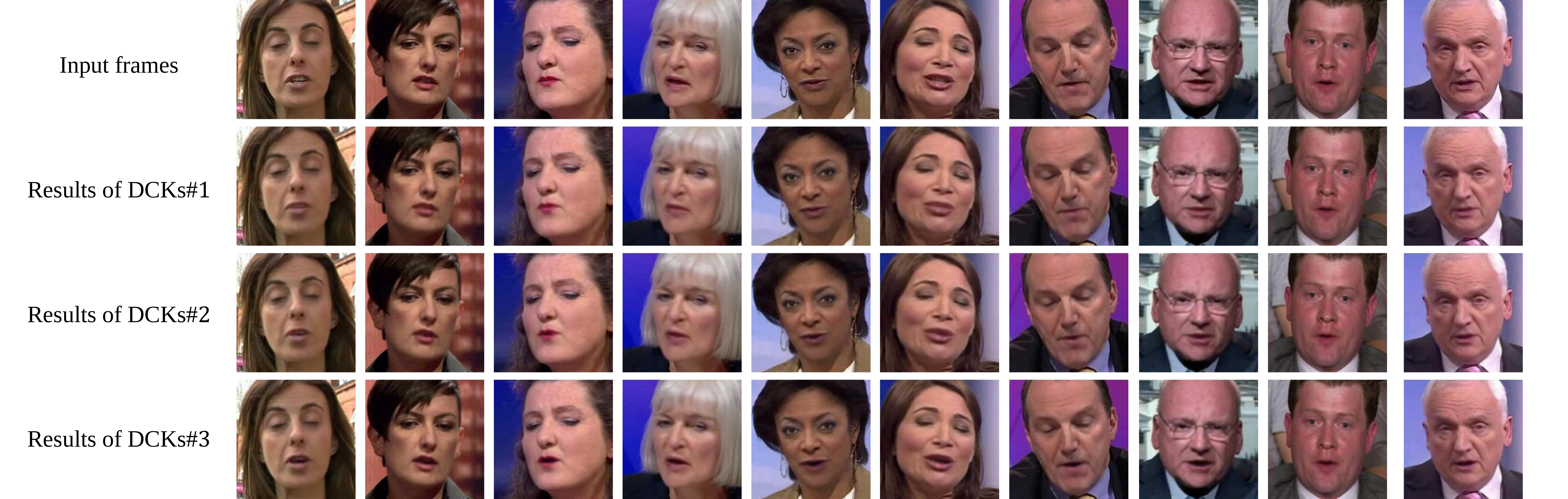}
  \caption{
Given different audio inputs \yzp{(successive 3 frames of the word `for')}, our system $f'$ can infer different DCKs (i.e., DCKs\#1,\#2 and \#3), such that $f'$ with DCKs\#1,\#2 and \#3 can transfer a face (with arbitrary expression in any input frame) into expressions with different mouth shapes.  }
 \label{fig:validation_explanation}
\end{figure*}

\section{Experiments}
\label{sec:experiment}

\subsection{Implementation Details}

We implemented our method with PyTorch~\cite{paszke2017automatic} and OpenCV. We trained the model on a server with an Intel Xeon Gold 6126 (2.60 GHz) and a NVIDIA TITAN RTX GPU. We also tested it on the same server. \yzp{We use a mixed video dataset (including real videos and synthetic videos) described in Sec.~\ref{subsec:trainning} to train our model.}

Our system starts with video pre-processing (Figure~\ref{fig:pipeline}). For a video with background, we crop the facial area (detected by Dlib~\cite{king2009dlib}) from the video and resize it to $256 \times 256$ as the input of our system. At the end of the pipeline, we cover the facial area with generated results directly without image fusion, which is a major advantage of our method that helps achieve real-time performance.

\yzp{We use a U-net with DCKs, called Adapted U-net, as the generator, which has $5$ down-sampling layers, $4$ middle layers and $5$ up-sampling layers, where all middle layers are with DCKs. We use the pre-trained audio network~\cite{Wav2Lip}, which consists of 2D convolutional layers and residual blocks, to extract audio features from Mel Spectrogram of input audio, where the parameters of Mel Spectrogram are the same as \cite{Wav2Lip}. For each layer with a dynamic convolution kernel, we train a fully connected network to infer the DCK from the audio features.}
We reshape the output of this module to the shape $l \times c_1 \times (c_2 \times ks \times ks + 1)$, where $l$ is the length of video sequence, $ks$ is the kernel size, $c_1$ and $c_2$ are the numbers of channels of output and input of the corresponding convolution layers of the adapted U-net. In all our experiments, $ks=1$, $c_1 = 256$ and $c_2 = 256$. We implement the DCKs by convolution operators with the group parameter in PyTorch.

\begin{figure}[ht]
\centering
\includegraphics[width=\columnwidth]{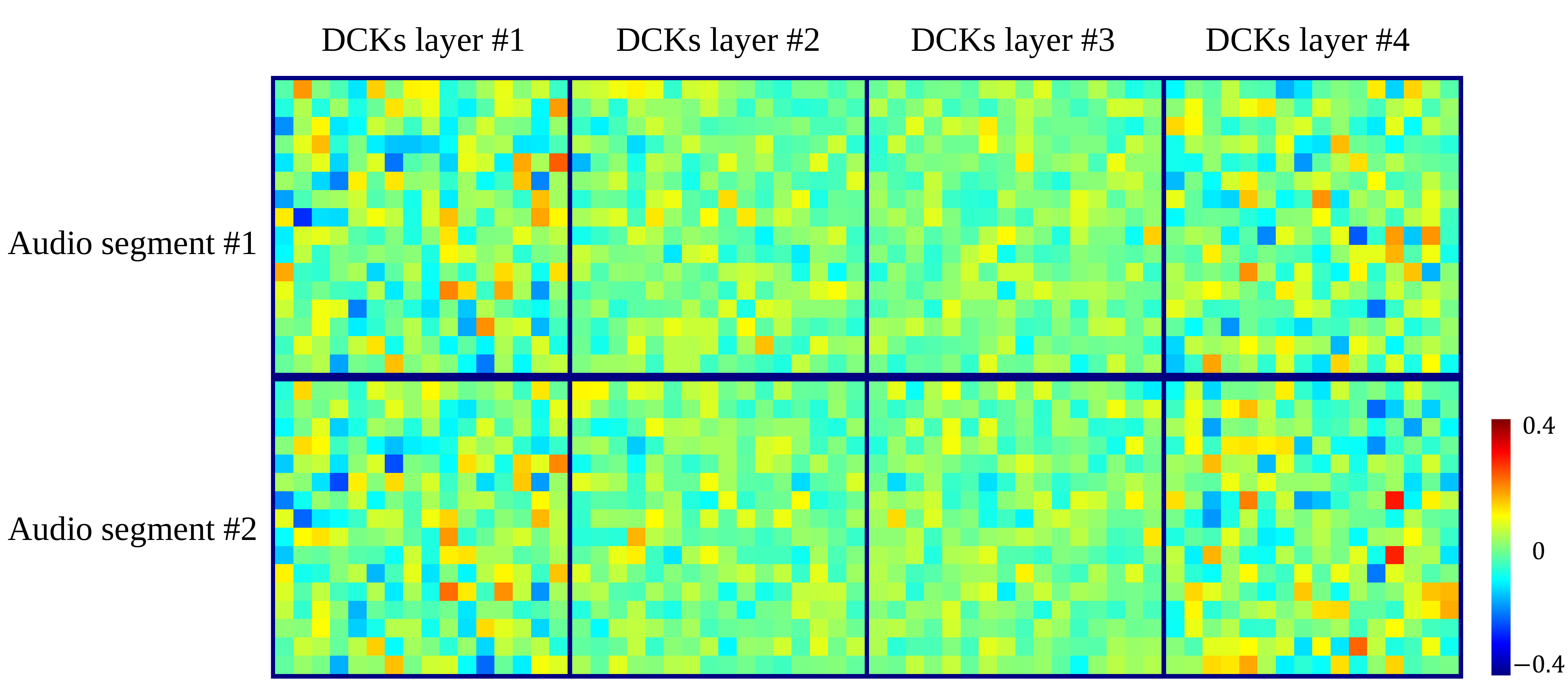}
\caption{Four DCKs layer inferred from 
two audio segments are showed. Different audio inputs lead different parameters of convolution kernel. \yzp{For better observation, we only visualize part of parameters of DCKs (i.e., the first $16 \times 16$ parameters).}}
\label{fig:DCKs_layer_show}
\end{figure}

\subsection{Validation of DCK understanding}
\label{subsec:DCK-validation}

Our system is a fully convolutional network $f'$ with DCKs. The theoretical interpretation in Section \ref{sec:DCK-interpretation} indicates that our system $f'$ can well approximate a set of networks $\{f^i\}$ with fixed parameters, such that according to different audio input, the system $f'$ can adaptively choose the desired $f\in\{f^i\}$. 
Some results are shown in Figure~\ref{fig:validation_explanation}. Given different audio inputs, our DCKs can successfully infer different parameters that drive the system $f'$ to approximate different $f\in\{f^i\}$, e.g., transforming a face (with arbitrary expression in any input frame) into other expressions with different mouth shapes. 
Another advantage of the system $f'$ with DCKs is that for any finite set of $\{f^i\}$, each input frame can only be transferred into other expressions in a finite expression space, while our DCKs can infer parameters in a continuous space, so that our system can provide better mechanism to secure the inter-frame continuity.

\begin{figure}[ht]
\centering
\includegraphics[width=\columnwidth]{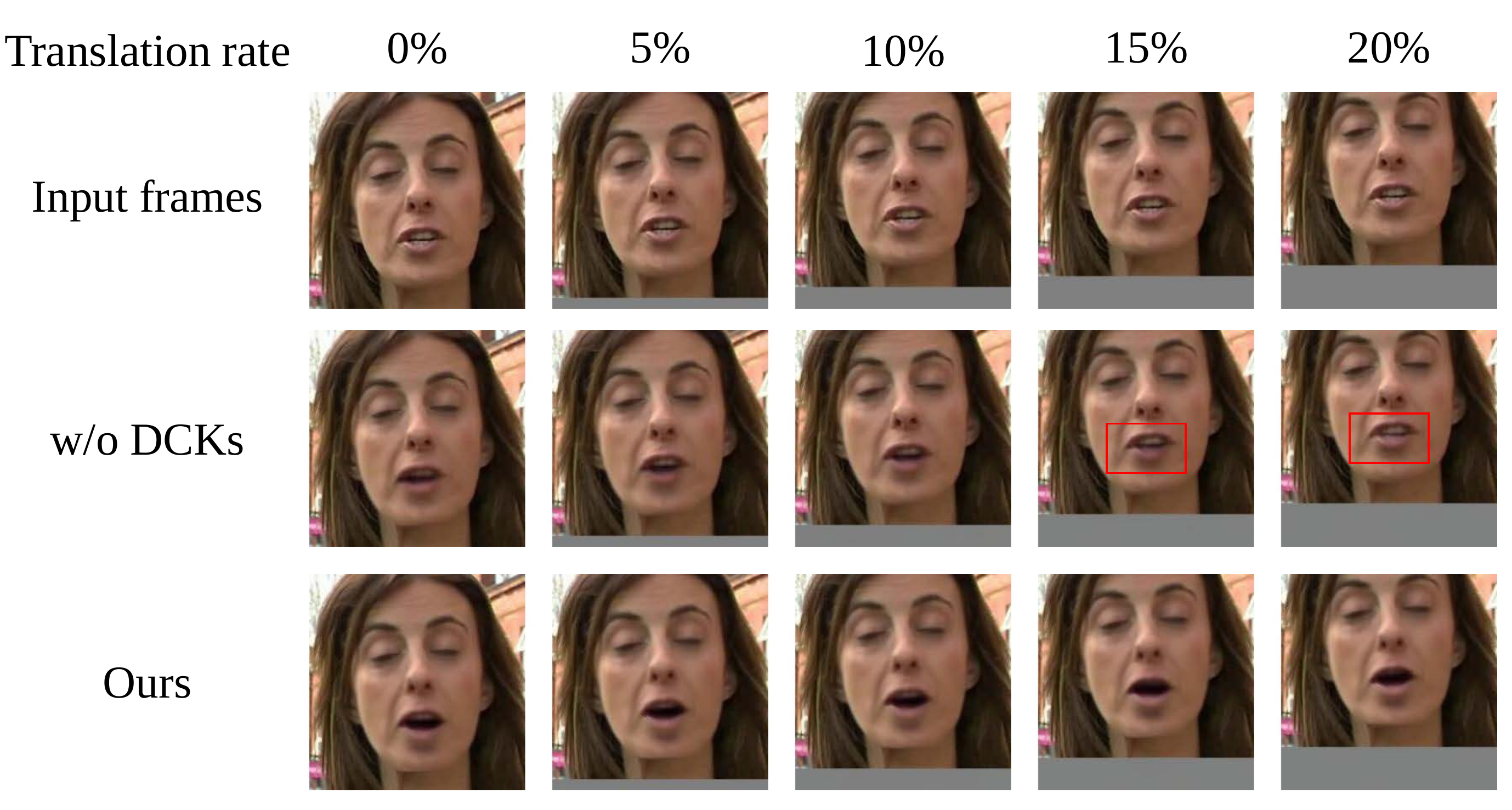}
\caption{Ablation study for comparing $f_{w/o\_DCKs}$ and our system $f'$. \yzp{The phoneme of input audio is /ba:/ of the word `Obama'.} See text for details.}
\label{fig:ablation_invariance}
\end{figure}

\begin{figure}[ht]
\centering
\includegraphics[width=\columnwidth]{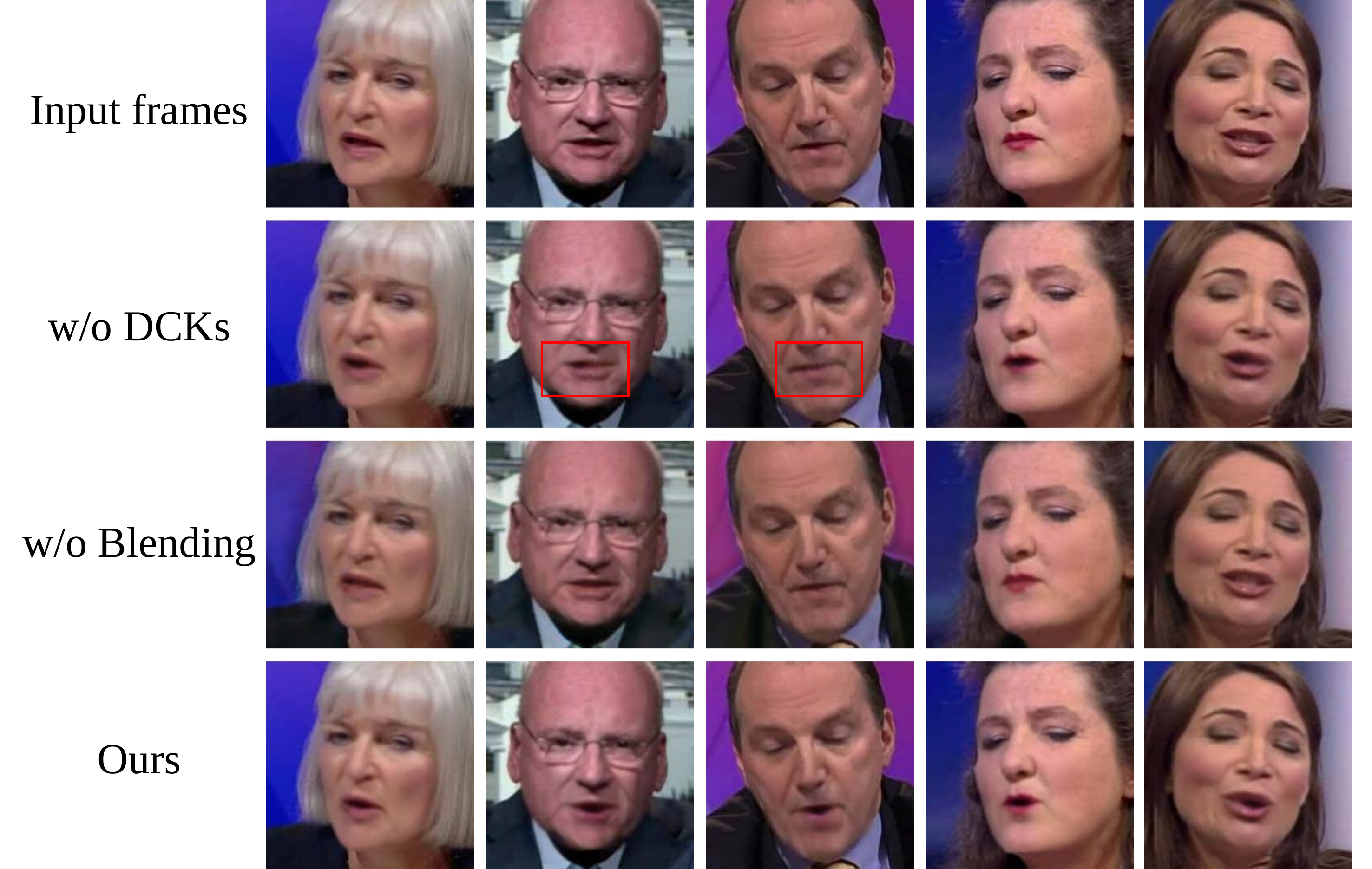}
\caption{Ablation study. The first row shows several frames of real videos as input frames. We use the same audio features as input audio for all the input frames. \yzp{The syllable of input audio is the word `were'. The second to the fourth rows show the generation results of our method with one of the modules disabled along with our full model. The generation results without DCKs sometimes have wrong lip motion. The generation results of directly generating the synthetic frames without blending are almost the same as input frames (mouth shape in particular), which shows our method without blending (i.e. the attention mechanism) is difficult to generate good talking-face videos.} Only the whole method can generate good results in all cases.}
\label{fig:ablation_study}
\end{figure}

{
\subsection{Visualization of DCKs}

A fully convolutional network with DCKs is a black box. Visualization of DCKs helps us understand it how to work. In all our experiment, kernel sizes of DCKs are $1$ so we can visualize each DCKs layer inferred from an audio segment as an image whose weight and height are the numbers of input and output channels of the DCKs layer. The DCKs layer could be regarded as the correlation coefficient between different channels, i.e. their covariance matrix. Four DCKs layer inferred from 
two audio segments are showed in Figure~\ref{fig:DCKs_layer_show}. Results show different audio inputs lead different parameters of convolution kernel.
}

\subsection{Ablation Study}
\label{subsec:ablation}

The ablation study focuses on the novel DCKs and blending. First, we design a network $f_{w/o\_DCKs}$ without DCK, which \yzp{reshapes the audio features to $16 \times H \times W$ (i.e., 16 channels and the same resolution $H \times W$ as the image), and use the spatial attention fusion module of the method~\cite{zhang2020davd} to fuse audio features and image features.} 
As shown in Figure~\ref{fig:ablation_invariance}, we compare $f_{w/o\_DCKs}$ with our fully convolutional network $f'$ with DCKs in the following test scenario: the face region in the input frame is moved upwards (defined by a translation rate which is the number of upwards translation pixels over the frame height). The results show that $f'$ is invariant for translation and outputs similar results for different translation rates, while $f_{w/o\_DCKs}$ \yzp{outputs different results for different translation rates, which leads to bad lip synchronization for slightly larger translation rate (Figure~\ref{fig:ablation_invariance} middle row, $15\%$ and $20\%$)}. The reason is possibly that reshaping the audio features to \yzp{image space} requires fixed semanteme at each pixel.

\begin{table}[ht]
\centering
\caption{Quantitative Comparison between ours and ablation study methods.} 
\begin{tabular}{c|c|c|c}
\hline
  Metric & w/o Blending & w/o DCKs & Ours \\
  \hline
  PSNR $\uparrow$ & 29.29 & 31.08 & {\bf 31.98}\\
  \hline
  SSIM $\uparrow$ & 0.74 & 0.78 & {\bf 0.81}\\
  \hline
  LMD $\downarrow$ & 1.65 & 1.61 & {\bf 1.44} \\
  \hline
\end{tabular}
\label{tab:ablation_quantitative_comparison}
\end{table}

\yzp{To demonstrate the effectiveness of our blending scheme}, we train a network to directly generate frames without blending. The qualitative results are shown in Figure~\ref{fig:ablation_study}. We use the same audio features as input audio for all the input frames, \yzp{whose syllable is the word `were'}. We compare the generation results of our method with and without the two modules, i.e. DCKs and blending. \yzp{The generation results without DCKs sometimes have wrong lip motion. The generation results of directly generating the synthetic frames without blending are almost the same as input frames (mouth shape in particular), which shows our method without blending (i.e. the attention mechanism) is difficult to generate good talking-face videos.} Only the whole method can generate good results in all cases.

\yzp{We use the test set of LRW dataet~\cite{ChungZ16} for quantitative metric evaluation. For each video in the test set, we input its first frame and audio signal to the network, and generate a talking-face video for each comparison method. We compare the results with the ground-truth videos, after aligning them according to the way used in ATVGnet~\cite{ChenMDX19}. We use Peak Signal to Noise Ratio (PSNR) and Structural Similarity Index Metrics (SSIM) to evaluate the quality of images, and use Landmark Distance (LMD)~\cite{ChenLMDX18} to evaluate the accuracy of lip movement. The results of quantitative comparison are summarized in Table~\ref{tab:ablation_quantitative_comparison}. We can see that DCKs and blending are helpful for generating talking-face videos.}

{
\subsection{Comparison with State of the Arts}
\label{subsec:stoa}

In this section, we compare our model with state-of-the-art methods, including ATVGnet~\cite{ChenMDX19}, You Said That~\cite{ChungJZ17}, Wav2Lip~\cite{Wav2Lip}, X2Face~\cite{WilesKZ18}, DAVS~\cite{ZhouLLLW19}, SDA~\cite{VougioukasPP18}, Yi's Method~\cite{yi2020audio}. We first introduce and discuss these methods.


\textbf{ATVGnet.} ATVGnet~\cite{ChenMDX19} generates talking-face video in real time from input photo and audio by hierarchical networks. It crops the facial area from the input photo and aligns it by affine transformation based on facial landmarks extracted from Dlib~\cite{king2009dlib}. Because the alignment operation changes and fixes the view angle, results of ATVGnet are talking-face videos without background and head motion. The resolution of its generation results is $128 \times 128$, which is lower than ours. Its results has neither head motion nor eye blink.

\textbf{You Said That.} You Said That~\cite{ChungJZ17} generates talking-face video from input photo and audio using two CNNs to extract features from audio (spectrum) and photo separately and then concatenating them in channels. It aligns the input photo by applying spatial registration so results of You Said That are talking-face videos without background, head motion and eye blink. The resolution of its generation results is $112 \times 112$, which is lower than ours. 

\begin{figure*}[tb]
  \centering
  \includegraphics[width=0.96\textwidth]{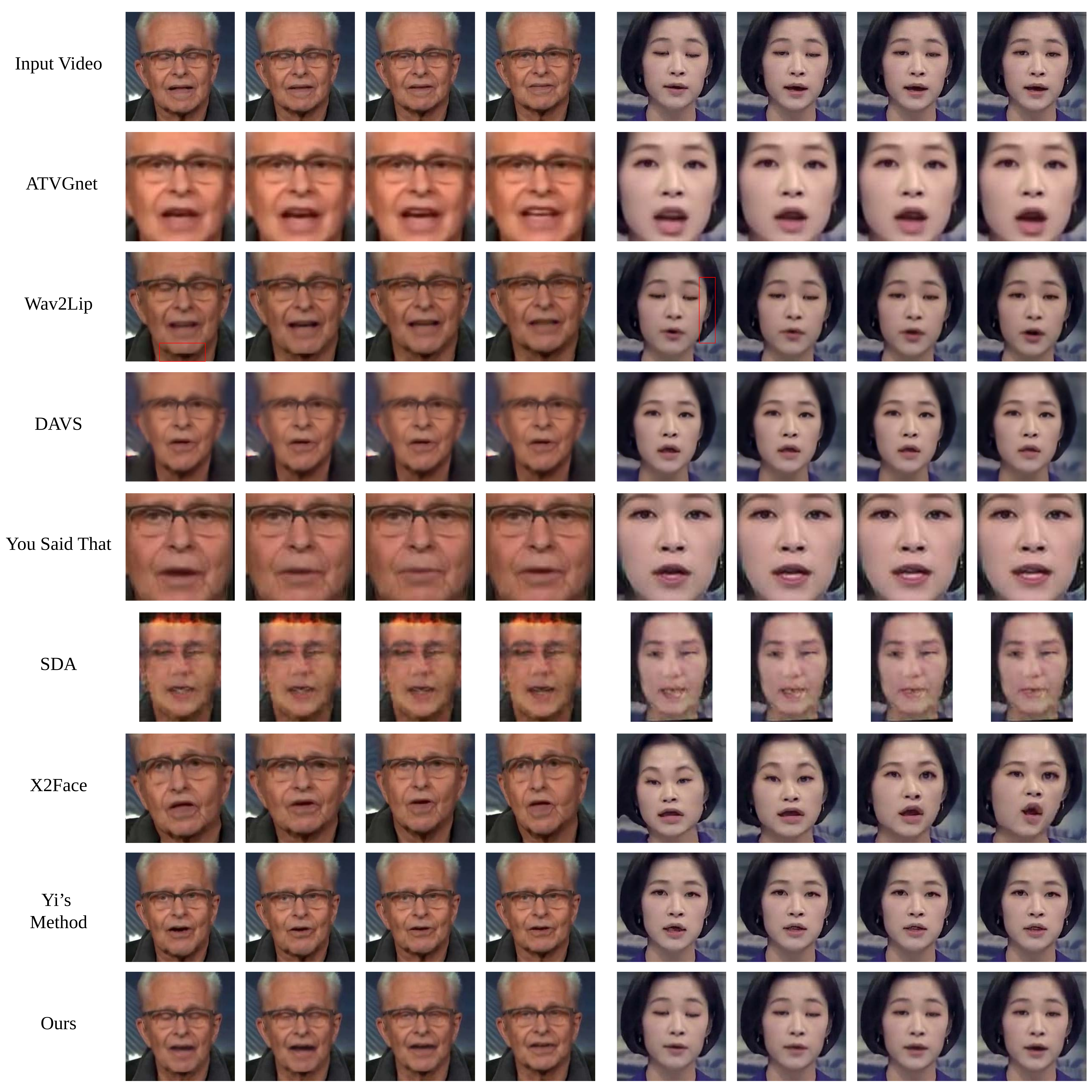}
\vspace{-0.05in}
\caption{
The qualitative comparison between the STOA methods (Wav2Lip~\cite{Wav2Lip}, X2Face~\cite{WilesKZ18}, You said that~\cite{ChungJZ17}, SDA~\cite{VougioukasPP18}, DAVS~\cite{ZhouLLLW19}, ATVGnet~\cite{ChenMDX19} and Yi's Method~\cite{yi2020audio}) and ours. Results of SDA are apparently worse than others. X2Face distorts the shape of face and changes the identity. ATVGnet, DAVS and You Said That results have no head motion and no eye blink. ATVGnet results are of low resolution and degraded visual quality. Wav2Lip results are of low definition in facial area, which sometimes causes an obvious boundary (red boxes) between facial part and other parts when directly covering the generated region onto the background. Yi's Method can not generate talking-head videos in real time. While our method can generate high-quality talking-head videos with head motion in real time. \yzp{The syllables of input audios are the phrase `I am' in the left and the word `really' in the right.}}
\label{fig:methods_compare}
\vspace{-0.05in}
\end{figure*}

\textbf{Wav2Lip.} Wav2Lip~\cite{Wav2Lip} generates talking-face video in real time from input photo (or input video) and audio using CNNs with encoder-decoder structure. It uses a lip-sync expert model to train the generation model in order to achieve high lip synchronization. However, the resolution of its generation results is $96 \times 96$, which is lower than ours.

\begin{table*}[ht]
\centering
\caption{Quantitative Comparison between ours and the STOA methods (Wav2Lip~\cite{Wav2Lip}, X2Face~\cite{WilesKZ18}, You said that~\cite{ChungJZ17}, SDA~\cite{VougioukasPP18}, DAVS~\cite{ZhouLLLW19}, ATVGnet~\cite{ChenMDX19} and Yi's Method~\cite{yi2020audio}). The number in bracket is the ranking of method.} 
\begin{tabular}{c|c|c|c|c|c|c|c|c}
\hline
  Metric & Wav2Lip~\cite{Wav2Lip} & X2Face~\cite{WilesKZ18} & YST~\cite{ChungJZ17} & SDA~\cite{VougioukasPP18} & DAVS~\cite{ZhouLLLW19} & ATVGnet~\cite{ChenMDX19} & Yi's Method~\cite{yi2020audio} & Ours \\
  \hline
  PSNR $\uparrow$ & 30.72 & 29.82 & 29.91 & 29.44 & 29.81 & 30.91 & 30.85 & \yzp{{\bf 31.98}} (1)\\
  \hline
  SSIM $\uparrow$ & 0.76 & 0.75 & 0.77 & 0.68 & 0.73 & {\bf 0.81} & 0.75 & {\bf 0.81} (1)\\
  \hline
  LMD $\downarrow$ & 1.61 & 1.60 & 1.63 & 2.32 & 1.73 & {\bf 1.37} & 1.58 & \yzp{1.44} (2)\\
  \hline
\end{tabular}
\label{tab:quantitative_comparison}
\end{table*}

\textbf{X2Face.} X2Face~\cite{WilesKZ18} controls a source frame using another frame with different identities to produce a generated frame with the identity of the source frame but the pose and expression of the other frame. It uses an auto-encoder to edit a frame in hidden space so the generation process can be driven by audio. However, the model can not totally disentangle different attributes of multi-modal inputs in hidden space which leads to unstable and discontinuous generation results. It can not keep the head pose so results of X2Face are talking-face videos without background.

\textbf{DAVS.} DAVS~\cite{ZhouLLLW19} generates talking-face video from input photo and audio using an auto-encoder to disentangle subject-related information and speech-related information via an associative-and-adversarial training process. However, the model can not totally disentangle speech-related information in hidden space, which leads to low audio-visual consistency. It cannot keep the boundary of the cropped face so results of DAVS are talking-face videos without background. It also has neither head motion nor eye blink. 

\textbf{SDA.} SDA~\cite{VougioukasPP18} generates talking-face video from input video and audio using a temporal GAN with 2 discriminators, i.e. frame discriminator and sequence discriminator, which are designed for different aspects of a video. However, the quality of generation results decreases over time. It can not keep the boundary of the cropped face so results of SDA are talking-face videos without background. The resolution of its generation results is $96 \times 128$, which is lower than ours.

\textbf{Rendering-based Methods.} In recent years, several rendering-based methods (e.g.~\cite{yi2020audio, song2020everybody, thies2020neural, wen2020photorealistic}) for generating talking-face video from a pair of unmatched video and audio have been proposed. Their pipelines are similar, which use the 3D parametric model as a prior, whose parameters are composed of identity components and expression components. They recover identity parameters from the video input by face reconstruction methods and predict expression parameters from the audio input by neural networks. Then they render the reconstructed face model and obtain a rendered frame. Most methods are trained for only one person, and need a large number of training data of one specified person. Only Yi's method~\cite{yi2020audio} trained a general model and works for arbitrary identity. The model is trained on a dataset with many identities and fine-tuned on a short video of the person. It also works without fine-tuning. Therefore, we only compare our method with Yi's method. In quantitative comparison, Yi's Method is trained on LRW dataset without fine-tuning. 

\textbf{Qualitative Comparison.} The qualitative comparison between the STOA methods (Wav2Lip~\cite{Wav2Lip}, X2Face~\cite{WilesKZ18}, You said that~\cite{ChungJZ17}, SDA~\cite{VougioukasPP18}, DAVS~\cite{ZhouLLLW19}, ATVGnet~\cite{ChenMDX19} and Yi's Method~\cite{yi2020audio}) and ours is shown in Figure~\ref{fig:methods_compare}. Results of SDA are apparently worse than others. X2Face distorts the shape of face and changes the identity. The results of ATVGnet, DAVS and You Said That have no head motion and no eye blink. ATVGnet results are of low resolution and degraded visual quality. Wav2Lip results are of low definition in facial area, which sometimes causes an obvious boundary between facial part and other parts when directly covering the generated region onto the background. Yi's Method can not generate talking-head videos in real time. While our method can generate high-quality talking-head videos with head motion in real time.

\textbf{Quantitative Comparison.} We use the test set of LRW dataet~\cite{ChungZ16} for quantitative metric evaluation. For each video in the test set, we take its first frame and audio signal as inputs, and generate a talking-face video for each comparison method. We compare the results with the ground-truth videos, after aligning them according to the way used in ATVGnet. We use Peak Signal to Noise Ratio (PSNR) and Structural Similarity Index Metrics (SSIM) to evaluate the quality of images, and use Landmark Distance (LMD) to evaluate the accuracy of lip movement. The results of quantitative comparison between ours and the SOTA methods (Wav2Lip~\cite{Wav2Lip}, X2Face~\cite{WilesKZ18}, You said that~\cite{ChungJZ17}, SDA~\cite{VougioukasPP18}, DAVS~\cite{ZhouLLLW19}, ATVGnet~\cite{ChenMDX19} and Yi's Method~\cite{yi2020audio}) are summarized in Table~\ref{tab:quantitative_comparison}, showing that our method has better performance than most of the SOTA methods on the three metrics above. 

\textbf{Perceptual Study.} There is no universal metric to evaluate the visual quality of generated video. The metrics used above are also limited in predicting visual quality. Therefore, it is a good way to use a perceptual study for measuring the visual quality. \yzp{We collected $10$ videos in the wild with different head poses and $10$ audios as inputs, and combined them to generate $100$ videos. We used ATVGnet, Wav2Lip, X2Face, YST and our method to generate talking-face videos without background.} A group of \yzp{five} videos generated from the same input was presented in a random order to the participants and they were asked to select the video with the best visual quality (VQ), lip synchronization (LS), inter-frame continuity (IFC) and overall quality (Overall): (1) visual quality is to measure the definition and naturalness of videos, (2) lip synchronization is to measure the correspondence between lip movements and audios, (3) inter-frame continuity is to measure continuity between successive frames of videos, and (4) overall quality is to measure videos by combining all the three metrics. \yzp{$20$ participants attended the perceptual study and each of them compared $20$ random groups of video. The 
statistics of the user study are summarized in Table~\ref{tab:perceptual_study}. Our method has the best visual quality, lip synchronization, inter-frame continuity and overall quality. }

\begin{table}[ht]
\centering
\caption{Perceptual study on visual quality,  lip synchronization, and inter-frame continuity.}  
\begin{tabular}{c|c|c|c|c}
\hline
Methods                & VQ & LS & IFC & Overall \\ \hline
ATVGnet~\cite{ChenMDX19}          & 1.0\%  & 1.5\% & 1.0\% & 1.0\% \\ \hline
Wav2Lip~\cite{Wav2Lip}            & 18.0\% & 41.2\% & 29.5\% & 24.5\% \\ \hline
X2Face~\cite{WilesKZ18}           & 0.0\%  & 0.2\% & 0.0\% & 0.0\% \\ \hline
YST~\cite{ChungJZ17}              & 0.8\% & 1.2\%  & 2.0\% & 1.5\%  \\ \hline
Ours                   & {\bf 80.2\%} & {\bf 55.8\%} & {\bf 67.5\%} & {\bf 73.0\%} \\ \hline
\end{tabular}
\label{tab:perceptual_study}
\end{table}

In addition to the figures illustrated in this section, video examples are presented in the accompanying demo video. 

\begin{table*}[ht]
\centering
\caption{Running Time of ours and the STOA methods (Wav2Lip~\cite{Wav2Lip}, X2Face~\cite{WilesKZ18}, You said that~\cite{ChungJZ17}, SDA~\cite{VougioukasPP18}, DAVS~\cite{ZhouLLLW19}, ATVGnet~\cite{ChenMDX19} and Yi's Method~\cite{yi2020audio}). You said that, ATVGnet and X2Face change the head pose so they cannot generate results with background. SDA and DAVS cannot keep the boundary of the cropped face so they also cannot generate results with background.} 
\begin{tabular}{c|c|c|c|c|c|c|c|c}
\hline
  Metric & Wav2Lip~\cite{Wav2Lip} & X2Face~\cite{WilesKZ18} & YST~\cite{ChungJZ17} & SDA~\cite{VougioukasPP18} & DAVS~\cite{ZhouLLLW19} & ATVGnet~\cite{ChenMDX19} & Yi's Method~\cite{yi2020audio} & Ours \\
  \hline
  Time Cost per Frame (w/o BG) (ms)   & 9.9 & 95.0 & 88.9 & 32.5 & 107.2 & 6.7 & 567.4 & {\bf 4.7}\\
  \hline
  Maximum FPS (w/o BG) & 100 & 11 & 11 & 31 & 9 & 16 & 2 & {\bf 212}\\
  \hline
  Maximum FPS (with BG) & 42 & / & / & / & / & / & 1 & {\bf 60}\\
  \hline
\end{tabular}
\label{tab:running_time}
\end{table*}

\subsection{Running time}

Our method can generate talking-face videos in real time. For generating a $6 s$ video with $159$ frames, it takes $0.75 s$ in generating facial video with $256 \times 256$ resolution and it takes $2.62 s$ in total in generating a video with $1280 \times 720$ background. For generating a video with the same input, (1) ATVGnet~\cite{ChenMDX19} takes $1.07 s$ to generate video without background, (2) You Said That~\cite{ChungJZ17} takes $14.13 s$ to generate video without background, (3) X2Face~\cite{WilesKZ18} takes $15.10 s$ to generate video without background, (4) DAVS~\cite{ZhouLLLW19} takes $17.05 s$ to generate video without background, and (5) SDA~\cite{VougioukasPP18} takes $5.17 s$ to generate video without background. Yi's method~\cite{yi2020audio} takes $90.22 s$ to generate video without background and $127.24 s$ to generate video with background, which also takes about $1$ hour to fine-tune their network. Wav2Lip~\cite{Wav2Lip} takes $1.58 s$ to generate video without background and takes $3.83 s$ to direct cover it with background. However, sometimes the definition of facial area is lower than other parts which causes an obvious boundary between foreground and background. Therefore, only our method and Wav2Lip can generate talking-face video with background in real time (over $25$ fps). We show running time of all above methods in TABLE~\ref{tab:running_time} for a direct comparison. 
}

\section{Conclusion}

We propose a novel fully convolutional network with DCKs for the multi-modal task of audio-driven taking face video generation. Our simple yet effective system can generate high-quality talking-face video from unmatched video and audio in real time. Our solution is end-to-end, one-for-all and robust to different identities, head postures and audios. For preserving identities in both input and output talking-head videos, we propose a \yzp{novel supervised} training scheme. The results show our method can generate high-quality $60$ fps talking-head video with background in real time. Comparison and evaluation between our method and state-of-the-art methods show that our method achieves a good balance between various criteria such as running time, qualitative and quantitative qualities. Our novel DCK technique can potentially be applied to other multi-modal generation tasks, and meanwhile, our theoretical interpretation of DCK can be extended from fully convolutional network to forward networks involving ResNet modules, which we will investigate in future work.


%


\ifCLASSOPTIONcompsoc
  \section*{Acknowledgments}
\else
  \section*{Acknowledgment}
\fi

This work was supported by the Natural Science Foundation of China (61725204), Tsinghua University Initiative Scientific Research Program and Shanghai Municipal Science and Technology Major Project (2021SHZDZX0102).

\ifCLASSOPTIONcaptionsoff
  \newpage
\fi

\bibliographystyle{IEEEtran}
\bibliography{dcks}

\begin{thebibliography}{10}
\providecommand{\url}[1]{#1}
\csname url@samestyle\endcsname
\providecommand{\newblock}{\relax}
\providecommand{\bibinfo}[2]{#2}
\providecommand{\BIBentrySTDinterwordspacing}{\spaceskip=0pt\relax}
\providecommand{\BIBentryALTinterwordstretchfactor}{4}
\providecommand{\BIBentryALTinterwordspacing}{\spaceskip=\fontdimen2\font plus
\BIBentryALTinterwordstretchfactor\fontdimen3\font minus
  \fontdimen4\font\relax}
\providecommand{\BIBforeignlanguage}[2]{{%
\expandafter\ifx\csname l@#1\endcsname\relax
\typeout{** WARNING: IEEEtran.bst: No hyphenation pattern has been}%
\typeout{** loaded for the language `#1'. Using the pattern for}%
\typeout{** the default language instead.}%
\else
\language=\csname l@#1\endcsname
\fi
#2}}
\providecommand{\BIBdecl}{\relax}
\BIBdecl

\bibitem{FriedTZFSGGJTA19}
O.~Fried, A.~Tewari, M.~Zollh{\"{o}}fer, A.~Finkelstein, E.~Shechtman, D.~B.
  Goldman, K.~Genova, Z.~Jin, C.~Theobalt, and M.~Agrawala, ``Text-based
  editing of talking-head video,'' \emph{{ACM} Trans. Graph.}, vol.~38, no.~4,
  pp. 68:1--68:14, 2019.

\bibitem{ChenMDX19}
L.~Chen, R.~K. Maddox, Z.~Duan, and C.~Xu, ``Hierarchical cross-modal talking
  face generation with dynamic pixel-wise loss,'' in \emph{{IEEE} Conference on
  Computer Vision and Pattern Recognition ({CVPR})}, 2019, pp. 7832--7841.

\bibitem{ChungJZ17}
J.~S. Chung, A.~Jamaludin, and A.~Zisserman, ``You said that?'' in
  \emph{British Machine Vision Conference ({BMVC})}, 2017, pp. 109.1--109.12.

\bibitem{SongZLWQ19}
Y.~Song, J.~Zhu, D.~Li, A.~Wang, and H.~Qi, ``Talking face generation by
  conditional recurrent adversarial network,'' in \emph{Proceedings of the
  Twenty-Eighth International Joint Conference on Artificial Intelligence
  ({IJCAI})}, 2019, pp. 919--925.

\bibitem{ZhouLLLW19}
H.~Zhou, Y.~Liu, Z.~Liu, P.~Luo, and X.~Wang, ``Talking face generation by
  adversarially disentangled audio-visual representation,'' in \emph{The
  Thirty-Third {AAAI} Conference on Artificial Intelligence ({AAAI})}, 2019,
  pp. 9299--9306.

\bibitem{WilesKZ18}
O.~Wiles, A.~S. Koepke, and A.~Zisserman, ``X2face: {A} network for controlling
  face generation using images, audio, and pose codes,'' in \emph{Proceedings
  of the European Conference on Computer Vision ({ECCV})}, 2018, pp. 690--706.

\bibitem{FanWSX15}
B.~Fan, L.~Wang, F.~K. Soong, and L.~Xie, ``Photo-real talking head with deep
  bidirectional {LSTM},'' in \emph{{IEEE} International Conference on
  Acoustics, Speech and Signal Processing ({ICASSP})}, 2015, pp. 4884--4888.

\bibitem{VougioukasPP19}
K.~Vougioukas, S.~Petridis, and M.~Pantic, ``Realistic speech-driven facial
  animation with gans,'' \emph{International Journal of Computer Vision}, pp.
  1--16, 2019.

\bibitem{yi2020audio}
R.~Yi, Z.~Ye, J.~Zhang, H.~Bao, and Y.~Liu, ``Audio-driven talking face video
  generation with natural head pose,'' \emph{CoRR}, vol. abs/2002.10137, pp.
  1--12, 2020.

\bibitem{song2020everybody}
L.~Song, W.~Wu, C.~Qian, R.~He, and C.~C. Loy, ``Everybody's talkin': Let me
  talk as you want,'' \emph{CoRR}, vol. abs/2001.05201, pp. 1--13, 2020.

\bibitem{thies2020neural}
J.~Thies, M.~Elgharib, A.~Tewari, C.~Theobalt, and M.~Nie{\ss}ner, ``Neural
  voice puppetry: Audio-driven facial reenactment,'' in \emph{European
  Conference on Computer Vision}.\hskip 1em plus 0.5em minus 0.4em\relax
  Springer, 2020, pp. 716--731.

\bibitem{KimCTXTNPRZT18}
H.~Kim, P.~Garrido, A.~Tewari, W.~Xu, J.~Thies, M.~Nie{\ss}ner, P.~P{\'{e}}rez,
  C.~Richardt, M.~Zollh{\"{o}}fer, and C.~Theobalt, ``Deep video portraits,''
  \emph{{ACM} Trans. Graph.}, vol.~37, no.~4, pp. 163:1--163:14, 2018.

\bibitem{ThiesZSTN16}
J.~Thies, M.~Zollh{\"{o}}fer, M.~Stamminger, C.~Theobalt, and M.~Nie{\ss}ner,
  ``Face2face: Real-time face capture and reenactment of {RGB} videos,'' in
  \emph{{IEEE} Conference on Computer Vision and Pattern Recognition ({CVPR})},
  2016, pp. 2387--2395.

\bibitem{Averbuch-ElorCK17}
H.~Averbuch-Elor, D.~Cohen-Or, J.~Kopf, and M.~F. Cohen, ``Bringing portraits
  to life,'' \emph{ACM Transactions on Graphics (TOG)}, vol.~36, no.~6, pp.
  1--13, 2017.

\bibitem{PumarolaAMSM18}
A.~Pumarola, A.~Agudo, A.~M. Mart{\'{\i}}nez, A.~Sanfeliu, and
  F.~Moreno{-}Noguer, ``{GAN}imation: Anatomically-aware facial animation from
  a single image,'' in \emph{Proceedings of the European Conference on Computer
  Vision ({ECCV})}, 2018, pp. 835--851.

\bibitem{ZakharovSBL19}
E.~Zakharov, A.~Shysheya, E.~Burkov, and V.~S. Lempitsky, ``Few-shot
  adversarial learning of realistic neural talking head models,'' \emph{CoRR},
  vol. abs/1905.08233, 2019.

\bibitem{ZhangZHLLL19}
Y.~Zhang, S.~Zhang, Y.~He, C.~Li, C.~C. Loy, and Z.~Liu, ``One-shot face
  reenactment,'' \emph{CoRR}, vol. abs/1908.03251, 2019.

\bibitem{Nazzaro70}
J.~R. Nazzaro and J.~N. Nazzaro, ``Auditory versus visual learning of temporal
  patterns,'' \emph{Journal of Experimental Psychology}, vol.~84, no.~3, pp.
  477--478, 1970.

\bibitem{BlanzV99}
V.~Blanz and T.~Vetter, ``A morphable model for the synthesis of {3D} faces,''
  in \emph{Proceedings of the 26th Annual Conference on Computer Graphics and
  Interactive Techniques ({SIGGRAPH})}, 1999, pp. 187--194.

\bibitem{CaoWZTZ14}
C.~Cao, Y.~Weng, S.~Zhou, Y.~Tong, and K.~Zhou, ``Facewarehouse: {A} {3D}
  facial expression database for visual computing,'' \emph{{IEEE} Trans. Vis.
  Comput. Graph.}, vol.~20, no.~3, pp. 413--425, 2014.

\bibitem{HaDL17}
D.~Ha, A.~M. Dai, and Q.~V. Le, ``Hypernetworks,'' in \emph{5th International
  Conference on Learning Representations, {ICLR} 2017, Toulon, France, April
  24-26, 2017, Conference Track Proceedings}, 2017, pp. 1--18.

\bibitem{esquivel2019adaptive}
J.~Z. Esquivel, A.~C. Vargas, P.~L. Meyer, and O.~Tickoo, ``Adaptive
  convolutional kernels,'' in \emph{2019 IEEE/CVF International Conference on
  Computer Vision Workshop (ICCVW)}.\hskip 1em plus 0.5em minus 0.4em\relax
  IEEE, 2019, pp. 1998--2005.

\bibitem{li2018high}
B.~Li, J.~Yan, W.~Wu, Z.~Zhu, and X.~Hu, ``High performance visual tracking
  with siamese region proposal network,'' in \emph{Proceedings of the IEEE
  Conference on Computer Vision and Pattern Recognition}, 2018, pp. 8971--8980.

\bibitem{nie2019dynamic}
X.~Nie, Y.~Li, L.~Luo, N.~Zhang, and J.~Feng, ``Dynamic kernel distillation for
  efficient pose estimation in videos,'' in \emph{Proceedings of the IEEE
  International Conference on Computer Vision}, 2019, pp. 6942--6950.

\bibitem{chen2020dynamic}
Y.~Chen, X.~Dai, M.~Liu, D.~Chen, L.~Yuan, and Z.~Liu, ``Dynamic convolution:
  Attention over convolution kernels,'' in \emph{Proceedings of the IEEE/CVF
  Conference on Computer Vision and Pattern Recognition}, 2020, pp.
  11\,030--11\,039.

\bibitem{Wav2Lip}
K.~R. Prajwal, R.~Mukhopadhyay, V.~P. Namboodiri, and C.~V. Jawahar, ``A lip
  sync expert is all you need for speech to lip generation in the wild,'' in
  \emph{The 28th {ACM} International Conference on Multimedia ({MM})}, 2020,
  pp. 484--492.

\bibitem{RonnebergerFB15}
O.~Ronneberger, P.~Fischer, and T.~Brox, ``U-net: Convolutional networks for
  biomedical image segmentation,'' in \emph{Medical Image Computing and
  Computer-Assisted Intervention ({MICCAI})}, 2015, pp. 234--241.

\bibitem{kalman1960new}
R.~E. Kalman, ``{A New Approach to Linear Filtering and Prediction Problems},''
  \emph{Journal of Basic Engineering}, vol.~82, no.~1, pp. 35--45, 1960.

\bibitem{hall1997introduction}
D.~L. Hall and J.~Llinas, ``An introduction to multisensor data fusion,''
  \emph{Proceedings of the IEEE}, vol.~85, no.~1, pp. 6--23, 1997.

\bibitem{atrey2010multimodal}
P.~K. Atrey, M.~A. Hossain, A.~El~Saddik, and M.~S. Kankanhalli, ``Multimodal
  fusion for multimedia analysis: a survey,'' \emph{Multimedia systems},
  vol.~16, no.~6, pp. 345--379, 2010.

\bibitem{BaltrusaitisAM19}
T.~Baltrusaitis, C.~Ahuja, and L.~Morency, ``Multimodal machine learning: {A}
  survey and taxonomy,'' \emph{{IEEE} Trans. Pattern Anal. Mach. Intell.},
  vol.~41, no.~2, pp. 423--443, 2019.

\bibitem{hori2017attention}
C.~Hori, T.~Hori, T.-Y. Lee, Z.~Zhang, B.~Harsham, J.~R. Hershey, T.~K. Marks,
  and K.~Sumi, ``Attention-based multimodal fusion for video description,'' in
  \emph{Proceedings of the IEEE international conference on computer vision},
  2017, pp. 4193--4202.

\bibitem{nojavanasghari2016deep}
B.~Nojavanasghari, D.~Gopinath, J.~Koushik, T.~Baltru{\v{s}}aitis, and L.-P.
  Morency, ``Deep multimodal fusion for persuasiveness prediction,'' in
  \emph{Proceedings of the 18th ACM International Conference on Multimodal
  Interaction}, 2016, pp. 284--288.

\bibitem{lin2021orthogonalization}
S.~Lin, M.~Bai, F.~Liu, L.~Shen, and Y.~Zhou, ``Orthogonalization-guided
  feature fusion network for multimodal 2d+ 3d facial expression recognition,''
  \emph{IEEE Transactions on Multimedia}, vol.~23, pp. 1581--1591, 2021.

\bibitem{wang2020knowledge}
S.~Wang, L.~Hao, and Q.~Ji, ``Knowledge-augmented multimodal deep regression
  bayesian networks for emotion video tagging,'' \emph{IEEE Transactions on
  Multimedia}, vol.~22, no.~4, pp. 1084--1097, 2020.

\bibitem{kuang2020deep}
Q.~Kuang, X.~Jin, Q.~Zhao, and B.~Zhou, ``Deep multimodality learning for uav
  video aesthetic quality assessment,'' \emph{IEEE Transactions on Multimedia},
  vol.~22, no.~10, pp. 2623--2634, 2020.

\bibitem{buitelaar2018mixedemotions}
P.~Buitelaar, I.~D. Wood, S.~Negi, M.~Arcan, J.~P. McCrae, A.~Abele, C.~Robin,
  V.~Andryushechkin, H.~Ziad, H.~Sagha \emph{et~al.}, ``Mixedemotions: An
  open-source toolbox for multimodal emotion analysis,'' \emph{IEEE
  Transactions on Multimedia}, vol.~20, no.~9, pp. 2454--2465, 2018.

\bibitem{jiang2018modeling}
Y.-G. Jiang, Z.~Wu, J.~Tang, Z.~Li, X.~Xue, and S.-F. Chang, ``Modeling
  multimodal clues in a hybrid deep learning framework for video
  classification,'' \emph{IEEE Transactions on Multimedia}, vol.~20, no.~11,
  pp. 3137--3147, 2018.

\bibitem{wen2020photorealistic}
X.~Wen, M.~Wang, C.~Richardt, Z.-Y. Chen, and S.-M. Hu, ``Photorealistic
  audio-driven video portraits,'' \emph{IEEE Transactions on Visualization and
  Computer Graphics}, 2020.

\bibitem{mao2017least}
X.~Mao, Q.~Li, H.~Xie, R.~Y. Lau, Z.~Wang, and S.~Paul~Smolley, ``Least squares
  generative adversarial networks,'' in \emph{Proceedings of the IEEE
  International Conference on Computer Vision ({ICCV})}, 2017, pp. 2794--2802.

\bibitem{VougioukasPP18}
K.~Vougioukas, S.~Petridis, and M.~Pantic, ``End-to-end speech-driven facial
  animation with temporal {GAN}s,'' in \emph{British Machine Vision Conference
  ({BMVC})}, 2018, pp. 133:1--133:12.

\bibitem{zhu2017unpaired}
J.-Y. Zhu, T.~Park, P.~Isola, and A.~A. Efros, ``Unpaired image-to-image
  translation using cycle-consistent adversarial networks,'' in
  \emph{Proceedings of the IEEE international conference on computer vision
  ({ICCV})}, 2017, pp. 2223--2232.

\bibitem{simonyan2014very}
K.~Simonyan and A.~Zisserman, ``Very deep convolutional networks for
  large-scale image recognition,'' in \emph{International Conference on
  Learning Representations (ICLR)}, 2015, pp. 1--14.

\bibitem{zhang2018unreasonable}
R.~Zhang, P.~Isola, A.~A. Efros, E.~Shechtman, and O.~Wang, ``The unreasonable
  effectiveness of deep features as a perceptual metric,'' in \emph{Proceedings
  of the IEEE conference on computer vision and pattern recognition}, 2018, pp.
  586--595.

\bibitem{paszke2017automatic}
A.~Paszke, S.~Gross, S.~Chintala, G.~Chanan, E.~Yang, Z.~DeVito, Z.~Lin,
  A.~Desmaison, L.~Antiga, and A.~Lerer, ``Automatic differentiation in
  pytorch,'' in \emph{NIPS 2017 Autodiff Workshop: The Future of Gradient-based
  Machine Learning Software and Techniques}, 2017, pp. 1--4.

\bibitem{king2009dlib}
D.~E. King, ``Dlib-ml: A machine learning toolkit,'' \emph{Journal of Machine
  Learning Research}, vol.~10, no. Jul, pp. 1755--1758, 2009.

\bibitem{zhang2020davd}
X.~Zhang, X.~Wu, X.~Zhai, X.~Ben, and C.~Tu, ``Davd-net: Deep audio-aided video
  decompression of talking heads,'' in \emph{Proceedings of the IEEE/CVF
  Conference on Computer Vision and Pattern Recognition}, 2020, pp.
  12\,335--12\,344.

\bibitem{ChungZ16}
J.~S. Chung and A.~Zisserman, ``Lip reading in the wild,'' in \emph{13th Asian
  Conference on Computer Vision ({ACCV})}, 2016, pp. 87--103.

\bibitem{ChenLMDX18}
L.~Chen, Z.~Li, R.~K. Maddox, Z.~Duan, and C.~Xu, ``Lip movements generation at
  a glance,'' in \emph{Proceedings of the European Conference on Computer
  Vision ({ECCV})}, 2018, pp. 538--553.

\end{thebibliography}



\begin{IEEEbiography}[{\includegraphics[width=1in,height=1.25in,clip,keepaspectratio]{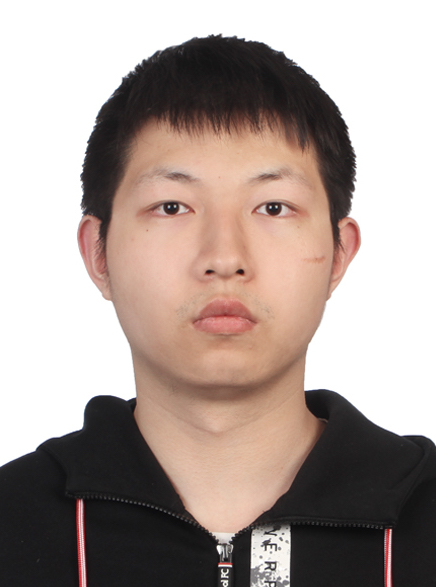}}]{Zipeng Ye} 
is a Ph.D student with Department of Computer Science and Technology, Tsinghua University. He received her B.Eng. degree from Tsinghua University, China, in 2017. His research interests include computational geometry, computer vision and computer graphics.
\end{IEEEbiography}

\begin{IEEEbiography}[{\includegraphics[width=1in,height=1.25in,clip,keepaspectratio]{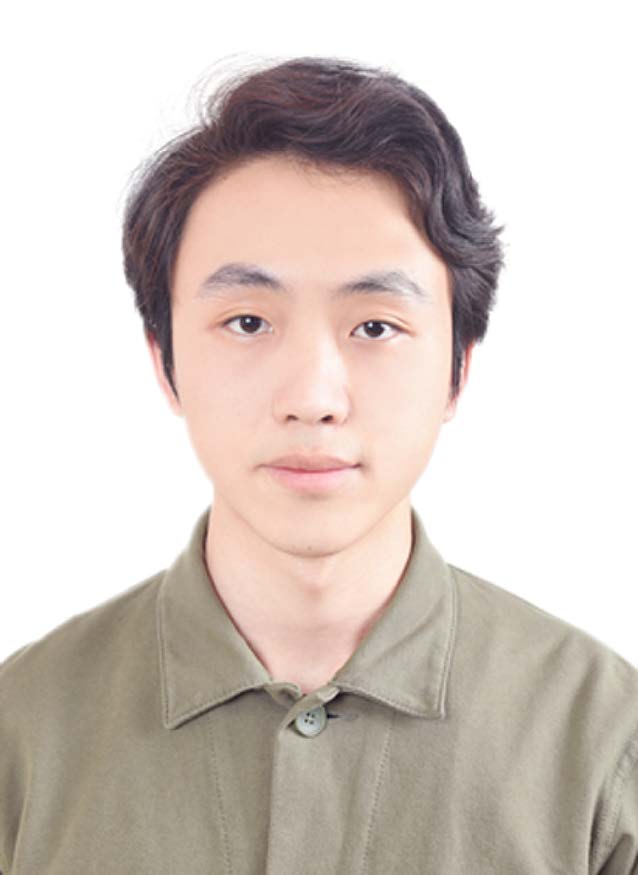}}]{Mengfei Xia}
is a Ph.D student with Department of Computer Science and Technology, Tsinghua University. He received his B.S. degree from Tsinghua University, China, in 2020.  He received her B.S. degree from Tsinghua University, China, in 2020. He won the silver medal twice in 30th and 31st National Mathematical Olympiad of China. His research interests include mathematical foundation in deep learning, image processing and computer vision.
\end{IEEEbiography}

\begin{IEEEbiography}[{\includegraphics[width=1in,height=1.25in,clip,keepaspectratio]{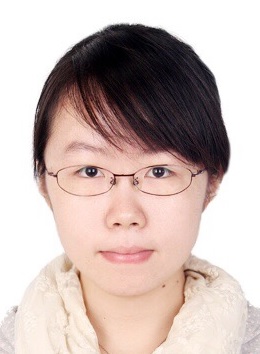}}]{Ran Yi}
is an assistant professor in the Department of Computer Science and Engineering, Shanghai Jiao Tong University. She received her B.Eng. degree and PhD degree from Tsinghua University, China, in 2016 and 2021. Her research interests include computational geometry, computer vision and computer graphics.
\end{IEEEbiography}

\begin{IEEEbiography}[{\includegraphics[width=1in,height=1.25in,clip,keepaspectratio]{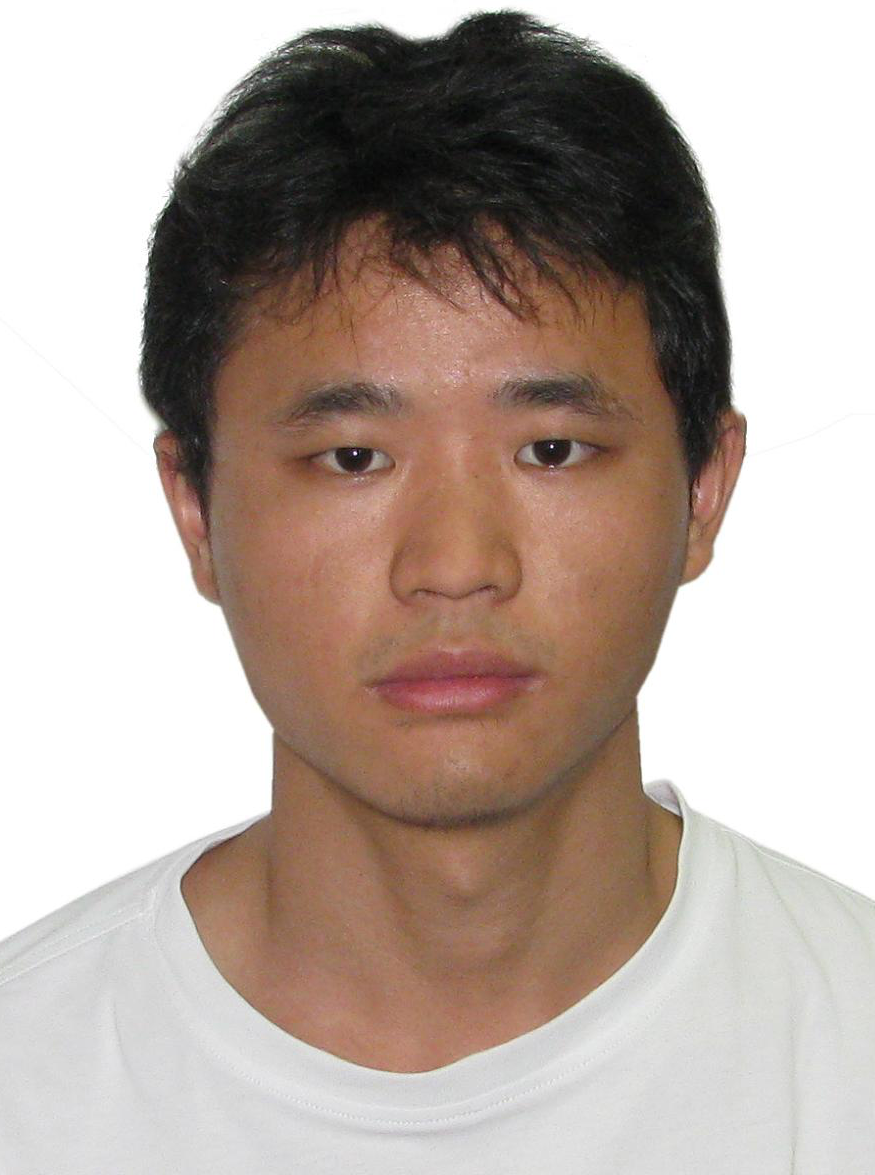}}]{Juyong Zhang}
is an associate professor in the School of Mathematical Sciences at University of Science and Technology of China. He received the BS degree from the University of Science and Technology of China in 2006, and the PhD degree from Nanyang Technological University, Singapore. His research interests include computer graphics, computer vision, and numerical optimization. He is an associate editor of The Visual Computer.
\end{IEEEbiography}

\begin{IEEEbiography}[{\includegraphics[width=1in,height=1.25in,clip,keepaspectratio]{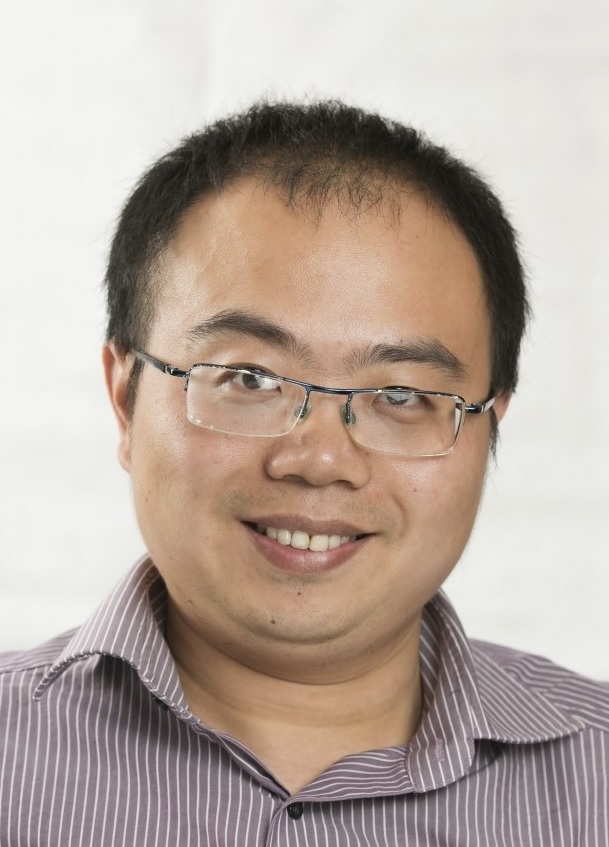}}]{Yu-Kun Lai}
is a Professor at School of Computer Science and Informatics, Cardiff University, UK. He received his B.S and PhD degrees in Computer Science from Tsinghua University, in 2003 and 2008 respectively. His research interests include computer graphics, computer vision, geometric modeling and image processing. For more information, visit \href{https://users.cs.cf.ac.uk/Yukun.Lai/}{https://users.cs.cf.ac.uk/Yukun.Lai/}
\end{IEEEbiography}

\begin{IEEEbiography}[{\includegraphics[width=1in,height=1.25in,clip,keepaspectratio]{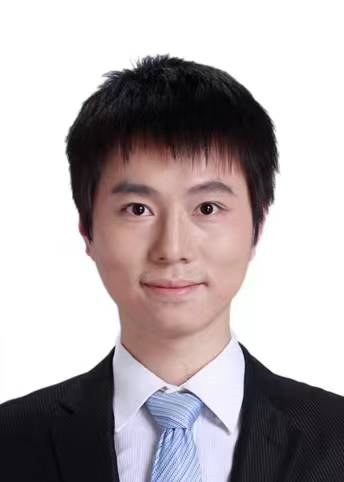}}]{Xuwei Huang}
received the bachelor degree in Information Engineer and the master degree in Electronic Engineer from South China University of Technology, GuangZhou, in 2012 and 2015, respectively. Currently, he is an AI algorithm engineer in Beijing Kuaishou Technology Co., Ltd. Since 2018, he has participated in many research and development works, including Facial Expression Recognition with the Animoji Technology, Speech synthesis Using few-shot learning, GAN-based live photo and Talking Face generation. He is responsible and focused on virtual human related technologies.
\end{IEEEbiography}

\begin{IEEEbiography}[{\includegraphics[width=1in,height=1.25in,clip,keepaspectratio]{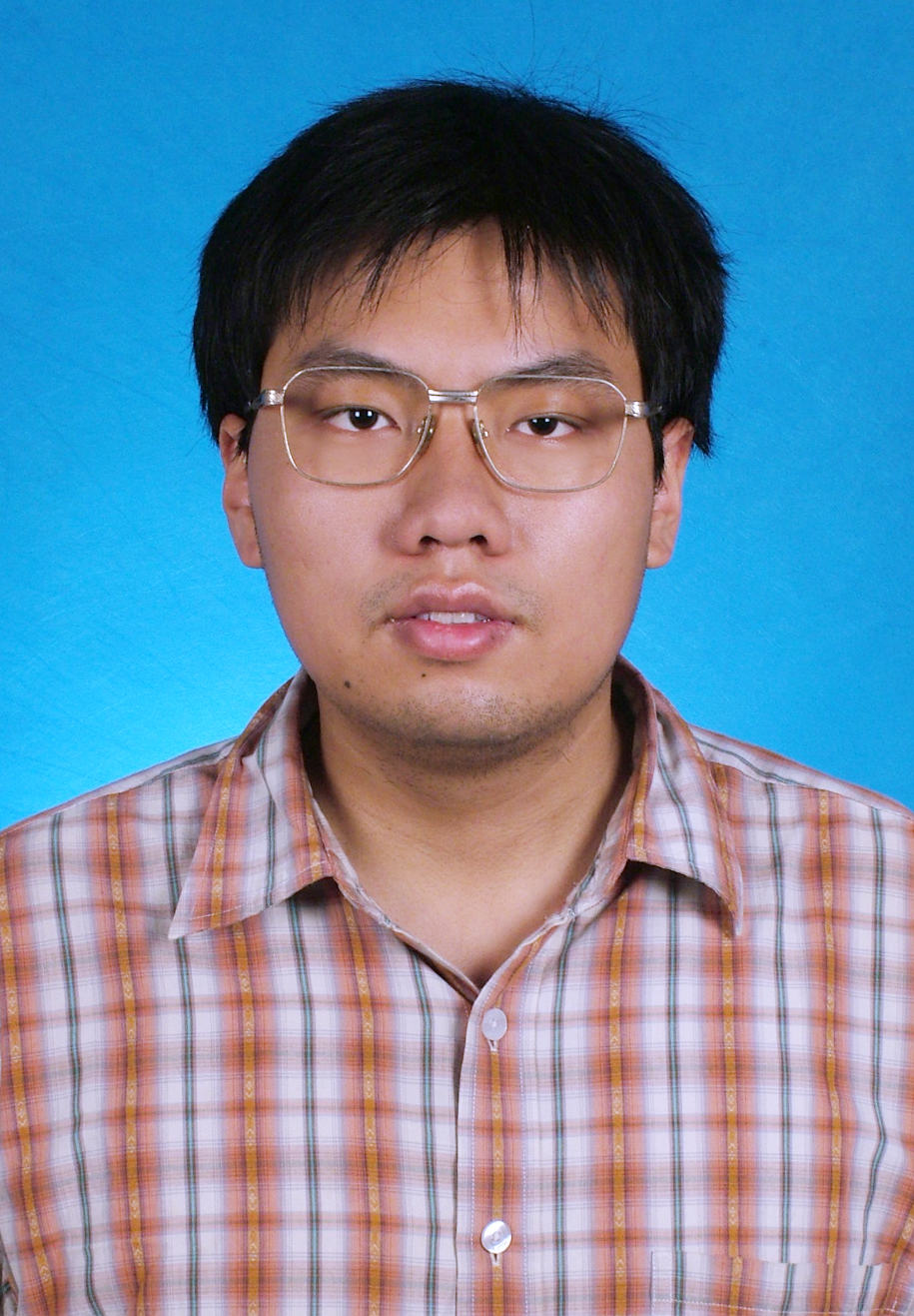}}]{Guoxin Zhang}
received his B.Eng and Ph.D. degree in the Department of Computer Science and Technology, Tsinghua University in 2007 and 2012, respectively. His research interests include computer graphics, geometric modeling, and computer vision, and he is currently working at Kwai Inc.
\end{IEEEbiography}

\begin{IEEEbiography}[{\includegraphics[width=1in,height=1.25in,clip,keepaspectratio]{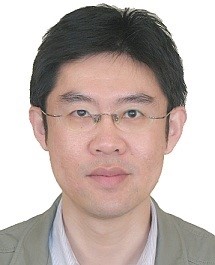}}]{Yong-Jin Liu}
is a Professor with the Department of Computer Science and Technology, Tsinghua University, China. He received the BEng degree from Tianjin University, China, in 1998, and the PhD degree from the Hong Kong University of Science and Technology, Hong Kong, China, in 2004. His research interests include computational geometry, computer graphics and computer vision. He is a senior member of the IEEE. For more information, visit \href{https://cg.cs.tsinghua.edu.cn/people/~Yongjin/Yongjin.htm}{http://cg.cs.tsinghua.edu.cn/people/$\sim$Yongjin/Yongjin.htm}
\end{IEEEbiography}

\end{document}